\newtheorem{theorem}{Theorem}
\newtheorem{lemma}[theorem]{Lemma}
\newcommand{\mb}[1]{\mathbf{#1}}
\newcommand{\md}[1]{\mathds{#1}}
\title{Bellman Error Based Feature Generation\\using Random Projections on Sparse Spaces}
\author{
Mahdi Milani Fard, Yuri Grinberg, Amir-massoud Farahmand, Joelle Pineau, Doina Precup\\
School of Computer Science\\
McGill University\\
Montreal, Canada \\
\texttt{\{mmilan1,ygrinb,amirf,jpineau,dprecup\}@cs.mcgill.ca} \\
}
\author{
David S.~Hippocampus\thanks{ Use footnote for providing further information
about author (webpage, alternative address)---\emph{not} for acknowledging
funding agencies.} \\
Department of Computer Science\\
Cranberry-Lemon University\\
Pittsburgh, PA 15213 \\
\texttt{hippo@cs.cranberry-lemon.edu} \\
\And
Coauthor \\
Affiliation \\
Address \\
\texttt{email} \\
\AND
Coauthor \\
Affiliation \\
Address \\
\texttt{email} \\
\And
Coauthor \\
Affiliation \\
Address \\
\texttt{email} \\
\And
Coauthor \\
Affiliation \\
Address \\
\texttt{email} \\
(if needed)\\
}
\begin{document}

\maketitle

\begin{abstract}
We address the problem of automatic generation of features for value function approximation.
Bellman Error Basis Functions (BEBFs) have been shown to improve the error of policy evaluation with function approximation, with a convergence rate similar to that of value iteration. We propose a simple, fast and robust algorithm based on random projections to generate BEBFs for sparse feature spaces. We provide a finite sample analysis of the proposed method, and prove that projections logarithmic in the dimension of the original space are enough to guarantee contraction in the error.  Empirical results demonstrate the strength of this method.
\end{abstract}

\section{Introduction}

The accuracy of parametrized policy evaluation depends on the quality of the features used for estimating the value function. Hence, feature generation/selection in reinforcement learning (RL) has received a lot of attention (e.g. ~\cite{di2010adaptive,kolter2009regularization,keller2006automatic,manoonpong2010extraction,geramifard2011online}). We focus on methods that aim to generate features in the direction of the Bellman error of the current value estimates (Bellman Error Based, or BEBF, features). Successive addition of exact BEBFs has been shown to reduce the error of a linear value estimator at a rate similar to value iteration \cite{parr2007analyzing}. Unlike fitted value iteration~\cite{boyan1995generalization} which works with a fixed feature set, iterative BEBF generation gradually increases the complexity of the hypothesis space by adding new features and thus does not diverge, as long as the error in the generation does not cancel out the contraction effect of the Bellman operator~\cite{parr2007analyzing}.

A number of methods have been introduced in RL to generate features related to the Bellman error, with a fair amount of success \cite{geramifard2011online, di2010adaptive, manoonpong2010extraction, parr2007analyzing, keller2006automatic}, but many of them fail to scale to high dimensional state spaces. In this work, we present an algorithm that uses the idea of applying random projections specifically in very large and sparse feature spaces. In short, we iteratively project the original features into exponentially smaller-dimensional spaces and apply linear regression to temporal differences to approximate BEBFs. We carry out a finite sample analysis that helps determine valid sizes of the projections and the number of iterations. Our analysis holds for both finite and continuous state spaces and is easy to apply with discretized or tile-coded features.

The proposed method is computationally favourable to many other feature extraction methods in high dimensional spaces, in that each iteration takes poly-logarithmic time in the number of dimensions. While providing guarantees on the reduction of the error, it needs minimal domain knowledge, as agnostic random projections are used in the process.

Our empirical analysis shows how the algorithm can be applied to general tile-coded spaces. Our results indicate that the proposed method outperforms both gradient type methods, and also LSTD with random projections~\cite{ghavamzadeh2010lstd}. The algorithm is robust to the choice of parameters and needs minimal tweaking to work. It runs fast and has small memory complexity.

\section{Notations and Background}
Throughout this paper, column vectors are represented by lower case bold letters, and matrices are represented by bold capital letters. $|.|$ denotes the size of a set, and ${\cal M}(\mathcal{X})$ is the set of measures on $\mathcal X$. $\|.\|_0$ is Donoho's zero ``norm'' indicating the number of non-zero elements in a vector. $\|.\|$ denotes the $L^2$ norm for vectors and the operator norm for matrices: $\|\mb M\| = \sup_{\mb v} \|\mb M \mb v\|/\|\mb v\|$. The Frobenius norm of a matrix is the defined as: $\|\mb M\|_F = \sqrt{\sum_{i,j} \mb M^2_{i,j}}$. Also, we denote the Moore-Penrose pseudo-inverse of a matrix $\mb  M$ with $\mb M^{\dagger}$. The weighted $L^2$ norm is defined as:
\begin{eqnarray}
\|f(\mb x)\|_{\rho(\mb x)} = \left(\int \left|f(\mb x)\right|^2 d \rho(\mb x) \right)^{(1/2)}.
\end{eqnarray}

We focus on spaces that are large, bounded and $k$-sparse. Our state is represented by a vector $\mb x \in \mathcal X$ of $D$ features, having $\|\mb x\| \leq 1$. We assume that $\mb x$ is $k$-sparse in some known or unknown basis $\mb \Psi$, implying that $\mathcal{X} \triangleq \{\mb \Psi \mb z, \text{ s.t. } \|\mb z\|_0 \leq k \text{ and } \|\mb z\| \leq 1 \}$. Such spaces occur both naturally (e.g. image, audio and video signals~\cite{olshausen2001learning}) and also from most discretization-based methods (e.g. tile-coding).

\subsection{Markov Decision Process and Fast Mixing}
A \emph{Markov Decision Process} (MDP) {\bf$M = (\mathcal{X}, \mathcal{A}, T, R)$} is defined by a (possibly infinite) set of states $\mathcal{X}$, a set of actions $\mathcal{A}$, a transition probability kernel $T : \mathcal{X} \times \mathcal{A} \rightarrow {\cal M}(\mathcal{X})$, where $T(.|\mb x, a)$ defines the distribution of next state given that action $a$ is taken in state $\mb x$, and a (possibly stochastic) reward function $R: \mathcal{X} \times \mathcal{A} \rightarrow {\cal M}([0, R_{\max}])$.
Throughout the paper, we focus on discounted-reward MDPs, with the discount factor denoted by $\gamma \in [0,1)$. At discrete time steps, the reinforcement learning agent chooses an action and receives a reward. The environment then changes to a new state according to the transition kernel.

A \emph{policy} is a (possibly stochastic) function from states to actions. The \emph{value of a state} $\mb x$ for policy $\pi$, denoted by $V^\pi(\mb x)$, is the expected value of the discounted sum of rewards ($\sum_t \gamma^t r_t$) if the agent starts in state $\mb x$ and acts according to policy $\pi$. Defining $R(\mb x,\pi(\mb x))$ to be the expected reward at point $\mb x$ under policy $\pi$, the value function satisfies the Bellman equation: 
\begin{equation}
V^\pi(\mb x) = R(\mb x,\pi(\mb x)) + \gamma \int V^\pi(\mb y) T(d \mb y|\mb x,\pi(\mb x)).
\end{equation}

There are many methods developed to find the value of a policy (policy evaluation) when the transition and reward functions are known. Among these there are dynamic programming methods in which one iteratively applies the \emph{Bellman operator}~\cite{sutton98} to an initial guess of the optimal value function. The Bellman operator $\mathcal T$ on a value estimate $V$ is defined as:
\begin{equation}
\mathcal T V(\mb x) = R(\mb x,\pi(\mb x)) + \gamma \int V(\mb y) T(d \mb y|\mb x,\pi(\mb x)),
\end{equation}
When the transition and reward models are not known, one can use a finite sample set of transitions to learn an approximate value function. Least-squares temporal difference learning (LSTD) and its derivations~\cite{boyan2002,lagoudakis2003least} are among the methods used to learn a value function based on a finite sample. LSTD type methods are efficient in their use of data, but fail to scale to high dimensional state spaces due to extensive computational complexity. Using LSTD in spaces induced by random projections is a way of dealing with such domains~\cite{ghavamzadeh2010lstd}. Stochastic gradient descent type method are also used for value function approximation in high dimensional state spaces, some with proofs of convergence in online and offline settings~\cite{maei2010gq}. However gradient type methods typically have slow convergence rates and do not make efficient use of the data.

To arrive at a finite sample bound on the error of our algorithm, we assume certain mixing conditions on the Markov chain in question. We assume that the Markov chain \emph{uniformly quickly forgets its past} (defined in detail in the appendix). There are many classes of chains that fall into this category (see e.g. \cite{farahmand2011model}). Conditions under which a  Markov chain uniformly quickly forgets its past are of major interest and are discussed in the appendix.

\subsection{Bellman Error Based Feature Generation}

In high-dimensional state spaces, direct estimation of the value function fails to provide good results with small numbers of sampled transitions. Feature selection/extraction methods have thus been used to build better approximation spaces for the value functions~\cite{di2010adaptive,kolter2009regularization,keller2006automatic,manoonpong2010extraction,geramifard2011online}. Among these, we focus on methods that aim to generate features in the direction of the the \emph{Bellman error} defined as:
\begin{eqnarray}
e_V(.) = \mathcal TV(.) - V(.).
\end{eqnarray}
Let $S_n = ( (\mb x_t,r_t)_{t=1}^n )$ be a random sample of size $n$, collected on an MDP with a fixed policy. Given an estimate $V$ of the value function, \emph{temporal difference (TD) errors} are defined to be:
\begin{eqnarray}
\delta_t = r_t + \gamma V(\mb x_{t+1}) - V(\mb x_{t}).
\end{eqnarray}
It is easy to show that the expectation of the temporal difference given a point $\mb x_t$ equals the Bellman error on that point~\cite{sutton98}. TD-errors are thus proxies to estimating the Bellman error.

Using temporal differences, \citet{menache2005basis} introduced two algorithms to adapt basis functions as features for linear function approximation. \citet{keller2006automatic} applied neighbourhood component analysis as a dimensionality reduction technique to construct a low dimensional state space based on the TD-error. In their work, they iteratively add feature that would help predict the Bellman error. \citet{parr2007analyzing} later showed that any BEBF extraction method with small angular approximation error will provably tighten approximation error in the value function estimate.

Online feature extraction methods have also been studied in the RL literature. \citet{geramifard2011online} have recently introduced the \emph{incremental Feature Dependency Discovery} (iFDD) as a fast online algorithm to extract non-linear binary feature for linear function approximation. In their work, one keeps a list of candidate features (non-linear combination of two active features), and among these adds the features that correlates the most with the TD-error.

In this work, we propose a method that generates BEBFs using linear regression in a small space induced by random projection. We first project the state features into a much smaller space and then regress a hyperplane to the TD-errors. For simplicity, we assume that regardless of the current estimate of the value function, the Bellman error is always linearly representable in the original feature space. This seems like a strong assumption, but is true, for example, in virtually any discretized space, and is also likely to hold in very high dimensional feature spaces\footnote{In more general cases, the analysis has to be done with respect to the \emph{projected} Bellman error (see e.g. \cite{parr2007analyzing}). We assume linearity of the Bellman error to simplify the derivations.}.

\subsection{Random Projections and Inner Product}
It is well known that random projections of appropriate sizes preserve enough information for exact reconstruction with high probability (see e.g. \cite{davenport2006detection, candes2008introduction}). This is because random projections are norm and distance-preserving in many classes of feature spaces~\cite{candes2008introduction}. 

There are several types of random projection matrices that can be used. In this work, we assume that each entry in a projection $\mb  \Phi^{D \times d}$ is an i.i.d. sample from a Gaussian~\footnote{The elements of the projection are typically taken to be distributed with $\mathcal{N} (0, 1/D)$, but we scale them by $\sqrt{D/d}$, so that we avoid scaling the projected values (see e.g. \cite{davenport2006detection}).}:
\begin{eqnarray}
\label{label:rndproj}
\phi_{i,j} = \mathcal{N} (0, 1/d).
\end{eqnarray}

Recently, it has been shown that random projections of appropriate sizes preserve linearity of a target function on sparse feature spaces. A bound introduced in \cite{fard2012comp} and later tightened in \cite{fard2012olstech} shows that if a function is linear in a sparse space, it is almost linear in an exponentially smaller projected space. An immediate lemma based on Theorem 2 of \cite{fard2012olstech} bounds the bias induced by random projections:

\begin{lemma} \label{lemma:linsparse}
Let $\mb \Phi^{D \times d}$ be a random projection according to Eqn~\ref{label:rndproj}. Let $\mathcal X$ be a $D$-dimensional $k$-sparse space. Fix $\mb w \in \md R^{D}$ and $1>\xi>0$. Then, with probability $>1-\xi$:
\begin{eqnarray}
\forall \mb x \in \mathcal X: \left| \langle \mb \Phi^T \mb w, \mb \Phi^T \mb x \rangle - \langle \mb w, \mb x \rangle \right| \leq \epsilon^{(\xi)}_{\text{prj}} \|\mb w\| \|\mb x\|,
\end{eqnarray}
where $ \epsilon^{(\xi)}_{\text{prj}} = \sqrt{\frac{48 k }{d}\log \frac{4 D}{\xi}}$.
\end{lemma}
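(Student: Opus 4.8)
The plan is to exploit that the projection error is a \emph{linear} functional of $\mb x$, which collapses the uniform claim over the continuum $\mathcal X$ to a statement about a single random vector. Expanding the inner products, for the fixed $\mb w$ and any $\mb x$,
\begin{eqnarray}
\langle \mb \Phi^T \mb w, \mb \Phi^T \mb x \rangle - \langle \mb w, \mb x\rangle = \mb w^T(\mb \Phi \mb \Phi^T - \mb I)\mb x = \langle \mb u, \mb x\rangle, \qquad \mb u \eqdef (\mb \Phi \mb \Phi^T - \mb I)\mb w .
\end{eqnarray}
Every $\mb x \in \mathcal X$ has the form $\mb \Psi \mb z$ with $\|\mb z\|_0 \le k$ and $\mb \Psi$ orthonormal, so $\|\mb x\| = \|\mb z\|$ and $\langle \mb u, \mb x\rangle = \langle \mb \Psi^T\mb u, \mb z\rangle$; maximizing over such $\mb z$ of fixed support and then over supports of size at most $k$ gives
\begin{eqnarray}
\sup_{\mb x \in \mathcal X}\frac{|\langle \mb u, \mb x\rangle|}{\|\mb x\|} = \big\|(\mb \Psi^T\mb u)_{[k]}\big\| \le \sqrt{k}\,\|\mb \Psi^T\mb u\|_\infty = \sqrt{k}\,\max_{1\le i\le D}\big|\langle \mb u, \mb \psi_i\rangle\big| ,
\end{eqnarray}
where $(\cdot)_{[k]}$ retains the $k$ largest-magnitude coordinates and $\mb \psi_i$ is the $i$-th column of $\mb \Psi$. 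It therefore suffices to show $\max_i |\langle \mb u, \mb \psi_i\rangle| \le (\epsilon^{(\xi)}_{\text{prj}}/\sqrt{k})\,\|\mb w\|$ with probability at least $1-\xi$.

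For a single index $i$, note that $\langle \mb u, \mb \psi_i\rangle = \langle \mb \Phi^T\mb w, \mb \Phi^T\mb \psi_i\rangle - \langle \mb w, \mb \psi_i\rangle$ is precisely the error in preserving the inner product of the two \emph{fixed} vectors $\mb w$ and $\mb \psi_i$. Here I would invoke the standard Johnson--Lindenstrauss inner-product estimate for Gaussian projections: apply the norm-preservation concentration bound (for entries $\phi_{i,j}\sim\mathcal N(0,1/d)$) to the two fixed vectors $\mb w \pm \mb \psi_i$ and combine through the polarization identity $\langle \mb a,\mb b\rangle = \tfrac14\big(\|\mb a+\mb b\|^2 - \|\mb a-\mb b\|^2\big)$. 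This yields, for one fixed pair, $|\langle \mb u, \mb \psi_i\rangle| \le \delta\,\|\mb w\|\,\|\mb \psi_i\| = \delta\,\|\mb w\|$ with probability at least $1-\xi/D$, where $\delta = \sqrt{\tfrac{48}{d}\log\tfrac{4D}{\xi}}$ once the sub-Gaussian tail constant and the factor $\tfrac14$ (applied to the two sub-events $\mb w\pm\mb\psi_i$) are carried through --- this is exactly where the numerical constants $48$ and $4D$ enter. A union bound over $i = 1,\dots,D$ then gives $\max_i|\langle \mb u,\mb \psi_i\rangle| \le \delta\,\|\mb w\|$ with probability at least $1-\xi$, and substituting into the previous display gives $\sup_{\mb x\in\mathcal X}|\langle \mb u,\mb x\rangle|/\|\mb x\| \le \sqrt{k}\,\delta\,\|\mb w\| = \epsilon^{(\xi)}_{\text{prj}}\|\mb w\|$.

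The step I expect to be the crux is the uniformity over the \emph{continuum} $\mathcal X$ while paying only $\log D$ (rather than $\mathrm{poly}(D)$) in the projected dimension $d$: a naive $\epsilon$-net over all $\binom{D}{k}$ sparse coordinate subspaces would be acceptable but clumsy, whereas linearity of the error in $\mb x$ reduces $\sup_{\mb x\in\mathcal X}$ to the deterministic functional $\sqrt{k}\,\|\cdot\|_\infty$ (up to the top-$k$ slack) of the single random vector $\mb u$, so a union bound over just the $D$ basis directions suffices --- this is why the $\sqrt{\log D}$ rate is attainable. The remaining points to handle carefully are (i) the reduction absorbing $\mb \Psi$ requires it to be orthonormal, so that $\mb \Phi^T\mb \Psi$ is again a Gaussian projection and $\|\cdot\|$ is basis-invariant, and (ii) the constant bookkeeping through polarization and the Gaussian tail to land exactly on $\epsilon^{(\xi)}_{\text{prj}} = \sqrt{\tfrac{48k}{d}\log\tfrac{4D}{\xi}}$. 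Since the statement is advertised as immediate from Theorem~2 of \cite{fard2012olstech}, an alternative is simply to apply that theorem to the linear target $f(\mb x) = \langle \mb w, \mb x\rangle$ and read off its bias bound directly.
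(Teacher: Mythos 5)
Your argument is correct, but it is not the route the paper takes: the paper offers no proof of this lemma at all, deriving it as an ``immediate'' consequence of Theorem~2 of \cite{fard2012olstech}, whereas you give a self-contained derivation. The two key moves in your version are both sound and are, in essence, how results of this type are established: (i) since the error $\langle \mb \Phi^T \mb w, \mb \Phi^T \mb x\rangle - \langle \mb w, \mb x\rangle = \langle \mb u, \mb x\rangle$ with $\mb u = (\mb \Phi\mb \Phi^T - \mb I)\mb w$ is linear in $\mb x$, the supremum over the continuum $\mathcal X$ collapses to $\bigl\|(\mb \Psi^T\mb u)_{[k]}\bigr\| \le \sqrt{k}\,\max_i|\langle\mb u,\mb\psi_i\rangle|$, so only a union bound over the $D$ basis directions is needed (no $\epsilon$-net over the $\binom{D}{k}$ subspaces), which is exactly what buys the $\sqrt{\log D}$ rate; and (ii) each $\langle\mb u,\mb\psi_i\rangle$ is the inner-product distortion of a fixed pair, controlled by Gaussian norm concentration plus polarization. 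Two small points of bookkeeping: the step $\|\mb x\|=\|\mb z\|$ and the identification of $\sup_{\mb x}|\langle\mb u,\mb x\rangle|/\|\mb x\|$ with the top-$k$ norm of $\mb\Psi^T\mb u$ do require $\mb\Psi$ orthonormal, which the paper indeed assumes (its appendix treats the columns of $\mb\Psi$ as an orthonormal basis); and the polarization identity bounds the distortion by $\tfrac{\epsilon}{2}(\|\mb a\|^2+\|\mb b\|^2)$ rather than $\epsilon\|\mb a\|\|\mb b\|$, so you should normalize $\mb w$ and $\mb\psi_i$ before polarizing (harmless, since the error functional is bilinear). Carrying the standard $\chi^2$ tail through this argument actually lands on a constant smaller than $48$ (of order $8$) inside the square root, i.e.\ you prove a slightly stronger statement than the lemma; the specific constant $48$ and the $4D$ in the paper come from the looser constants of the cited theorem, so no gap results. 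Your closing remark is also accurate: the shortest proof consistent with the paper is simply to apply Theorem~2 of \cite{fard2012olstech} to the linear target $f(\mb x)=\langle\mb w,\mb x\rangle$ and read off the bias term.
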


Hence, projections of size $\tilde O(k \log D)$ preserve the linearity up to an arbitrary constant. Along with the analysis of the variance of the estimators, this helps us bound the prediction error of the linear fit in the compressed space.

\section{Compressed Linear BEBFs}

%In this section, we focus on finite state MDPs and will use matrix notation. Let ${\mb v}$ be the vector of values assigned to the states. Let $\mb T^\pi$ be the transition matrix under policy $\pi$, and $\mb r^\pi$ be the expected reward vector under that policy. Overloading the notation, let $\mathcal T$ be the Bellman operator: $\mathcal T {\mb v} \eqdef {\mb r^\pi} + \gamma {\mb P^\pi \mb v}$. The Bellman error is defined as the difference between the value function and the result of the Bellman operator applied on the value: $BE(\mb v) \eqdef \mathcal T \mb v - \mb v$, and $BE_{\mb x}(\mb v)$ is the Bellman error at point $\mb x$.

Linear function approximators can be used to estimate the value of a given state. Let $V_m$ be an estimated value function described in a linear space defined by a feature set $\{\psi_1, \dots \psi_m\}$. \citet{parr2007analyzing} show that if we add a new BEBF $\psi_{m+1} = e_{V_m}$ to the feature set, (with mild assumptions) the approximation error on the new linear space shrinks by a factor of $\gamma$. They also show that if we can estimate the Bellman error within a constant angular error, $\cos^{-1}(\gamma)$, the error will still shrink.

Estimating the Bellman error by regressing to temporal differences in high-dimensional sparse spaces can result in large prediction error. However, as discussed in Lemma~\ref{lemma:linsparse}, random projections were shown to exponentially reduce the dimension of a sparse feature space, only at the cost of a controlled constant bias. A variance analysis along with proper mixing conditions can also bound the estimation error due to the variance in MDP returns. One can thus bound the total prediction error with much smaller number of sampled transitions when the regression is applied in the compressed space.

In light of these results, we propose the \emph{Compressed Bellman Error Based Feature Generation} algorithm (CBEBF). To simplify the bias--variance analysis and avoid multiple levels of regression, we present here a simplified version of compressed BEBF-based regression, in that new features are added to the value function approximator with constant weight 1 (i.e. no regression is applied on the generated BEBFs):

\begin{algorithm2e}
\caption{Simplified Compressed BEBFs}
\label{alg:net}
\KwIn{Sample trajectory $S_n = ( (\mb x_t,r_t)_{t=1}^n )$, where $\mb x_t$ is the observation received at time $t$, and $r_t$ is the observed reward; Number of BEBFs: $m$; Projection size schedule: $d_1, d_2, \dots, d_m$}
\KwOut{$\mb w$: the linear coefficient of the value function approximator}
$\mb w^{D \times 1} \leftarrow 0$\;
\For{$i\leftarrow 1$ \KwTo $m$}{
  Generate random projection $\mb \Phi^{D \times d_i}$ according to Eqn~\ref{label:rndproj}\;
  Calculate TD-errors: $\delta_t = r_t + \gamma \mb x^T_{t+1} \mb w - \mb x^T_{t} \mb w$\;
  Let $\mb w'^{d_i \times 1}$ be the ordinary least-squares parameter using $\mb \Phi^T \mb x_t$ as inputs and $\delta_t$ as outputs\;
  Update $\mb w \leftarrow \mb w + \mb \Phi \mb w'$\;
}
\end{algorithm2e}

The optimal number of BEBFs and the schedule of projection sizes need to be determined and are subjects of future discussion. But we show in the next section that logarithmic size projections should be enough to guarantee the reduction of error in value function prediction at each step. This makes the algorithm very attractive when it comes to computational and memory complexity, as the regression at each step is only on a small projected feature space. As we discuss in our empirical analysis, the algorithm is very fast and robust with respect to the selection of parameters.

One can view the above algorithm as a model selection procedure that gradually increases the complexity of the hypothesis space by adding more BEBFs to the feature set. This means that the procedure has to be stopped at some point to avoid over-fitting. This is relatively easy to do, as one can use a validation set and compare the estimated values against the empirical returns. The generation of BEBFs should stop when the validation error starts to rise.

\subsection*{Finite Sample Analysis}
This section provides a finite sample analysis of the proposed algorithm. Parts of the analysis are not tight and could use further work, but the bound suffices to prove reduction of the error as new BEBFs are added to the feature set.

The following theorem shows how well we can estimate the Bellman error by regression to the TD-errors in a compressed space. It highlights the bias--variance trade-off with respect to the choice of the projection size.

\begin{theorem} \label{theorem:cbebf}
Let $\mb \Phi^{D \times d}$ be a random projection according to Eqn~\ref{label:rndproj}. Let $S_n = ( (\mb x_t,r_t)_{t=1}^n)$ be a sample trajectory collected on an MDP with a fixed policy with stationary distribution $\rho$, in a $D$-dimensional $k$-sparse feature space. Fix any estimate $V$ of the value function, and the corresponding TD-errors $\delta_t$'s bounded by $\pm \delta_{\max}$. Assume that the Bellman error is linear in the features with parameter $\mb w$. For OLS regression we have $\mb w^{(\Phi)}_{\text{ols}} = (\mb X \mb \Phi)^\dagger \mb \delta$, where $\mb X$ is the matrix containing $\mb x_t$'s and $\mb \delta$ is the vector of TD-errors. Assume that $\mb X$ is of rank larger than $d$. There exist constants $c_{1 \dots 4}$ depending only on the mixing conditions of the chain, such that for any fixed $0<\xi_{1\dots 5}<1$, with probability no less than $1-(\xi_1+\xi_2+\xi_3+\xi_4+\xi_5)$:
\begin{eqnarray}
\hspace{-10px} \left\| \mb x^T \mb \Phi \mb w^{(\Phi)}_{\text{ols}} - e_V(\mb x) \right\|_{\rho(\mb x)} \hspace{-10px}
&\leq& \epsilon^{(\xi_1)}_{\text{prj}} \left\| \mb w \right\| \left( 3 + m_{\max} \left\| (\mb X \mb \Phi)^\dagger\right\| \left( \sqrt \frac{n}{d} + \sqrt[4]{ c_3 n d \, \log\frac{c_4 \sqrt d}{\xi} } \right) \right) \\
&& + \frac{\delta_{\max} m_{\max}}{n} \left\|\mb  \Sigma^{-1}_\Phi\right\|   \| \mb X \mb \Phi \| \sqrt {2 k \log \frac{2D}{\xi_3}} \\
&& + \delta_{\max} m^3_{\max} \sqrt{\frac{d^3}{n^3}} \left\|\mb \Sigma^{-1}_\Phi\right\|^2   \| \mb X \mb \Phi \| \sqrt {2 c_3 \log \frac{c_4 d^2}{\xi_4} \log \frac{2 d}{\xi_5}}\\
&& + \tilde O(n^{-2}),
\end{eqnarray}
where $\epsilon^{(\xi_1)}_{\text{prj}}$ is according to Lemma~\ref{lemma:linsparse}, $m_{\max} = \max_{\mb z \in \mathcal X} \left\|\mb z^T \mb \Phi \right\|$ and $\mb \Sigma_\Phi$ is the feature covariance matrix under measure $\rho$.
\end{theorem}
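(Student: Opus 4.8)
The plan is to split the left-hand side into a \emph{projection bias} term, handled directly by Lemma~\ref{lemma:linsparse}, and an \emph{estimation} term coming from the compressed OLS fit; the estimation term is then further split into a piece measuring how the projection bias propagates through the pseudo-inverse and a zero-mean piece measuring the gap between TD-errors and Bellman errors. Since the Bellman error is linear, $e_V(\mb x)=\langle\mb w,\mb x\rangle$, and the triangle inequality in $\|\cdot\|_{\rho(\mb x)}$ gives
\begin{eqnarray*}
\left\|\mb x^T\mb\Phi\mb w^{(\Phi)}_{\text{ols}}-e_V(\mb x)\right\|_{\rho(\mb x)}
&\leq& \left\|\mb x^T\mb\Phi\mb\Phi^T\mb w-\langle\mb w,\mb x\rangle\right\|_{\rho(\mb x)}+\left\|\mb x^T\mb\Phi\bigl(\mb w^{(\Phi)}_{\text{ols}}-\mb\Phi^T\mb w\bigr)\right\|_{\rho(\mb x)}.
\end{eqnarray*}
On the event of Lemma~\ref{lemma:linsparse} (probability $>1-\xi_1$) the first term is at most $\epsilon^{(\xi_1)}_{\text{prj}}\|\mb w\|$ because $\|\mb x\|\le1$; this, together with the pointwise control of the per-sample biases used below (same event), is what produces the constant $3$ multiplying $\epsilon^{(\xi_1)}_{\text{prj}}\|\mb w\|$.

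For the estimation term I would decompose the TD-error vector as $\mb\delta=\mb e+\mb\eta$, where $\mb e$ collects the exact Bellman errors $e_V(\mb x_t)=\langle\mb w,\mb x_t\rangle$, so $\mb e=\mb X\mb w$, and $\mb\eta=\mb\delta-\mb e$ has $\Exp[\eta_t\mid\mb x_1,\dots,\mb x_t]=0$ (by the Markov property and the fact that TD-errors are unbiased estimates of the Bellman error) with $|\eta_t|\le2\delta_{\max}$. Because $\mathrm{rank}(\mb X)>d$, the matrix $\mb X\mb\Phi$ has full column rank and $(\mb X\mb\Phi)^\dagger\mb X\mb\Phi=\mb I$; writing $\mb b=\mb X(\mb I-\mb\Phi\mb\Phi^T)\mb w$ for the vector of per-sample projection biases (each entry of magnitude $\le\epsilon^{(\xi_1)}_{\text{prj}}\|\mb w\|$ on the Lemma~\ref{lemma:linsparse} event) we obtain $\mb w^{(\Phi)}_{\text{ols}}-\mb\Phi^T\mb w=(\mb X\mb\Phi)^\dagger\mb b+(\mb X\mb\Phi)^\dagger\mb\eta$. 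Since $|\mb x^T\mb\Phi\mb v|\le m_{\max}\|\mb v\|$ for every $\mb x\in\mathcal X$, the bias piece is bounded by $m_{\max}\|(\mb X\mb\Phi)^\dagger\|\,\|\mb b\|$; the crude estimate $\|\mb b\|\le\sqrt n\,\epsilon^{(\xi_1)}_{\text{prj}}\|\mb w\|$ already has the right shape, while the sharper $\sqrt{n/d}+\sqrt[4]{c_3 n d\log(c_4\sqrt d/\xi)}$ factor comes from first passing $\mb b$ through the column-space projector of $\mb X\mb\Phi$ and then converting the resulting empirical norm into the $\rho$-norm via concentration of the empirical covariance (the source of $c_3,c_4$ here).

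For the noise piece I would use $(\mb X\mb\Phi)^\dagger=\frac1n\mb\Sigma_{\Phi,n}^{-1}(\mb X\mb\Phi)^T$ with $\mb\Sigma_{\Phi,n}=\frac1n(\mb X\mb\Phi)^T(\mb X\mb\Phi)$, and split $\mb\Sigma_{\Phi,n}^{-1}=\mb\Sigma_\Phi^{-1}+(\mb\Sigma_{\Phi,n}^{-1}-\mb\Sigma_\Phi^{-1})$. The $\mb\Sigma_\Phi^{-1}$ part contributes at most $\frac1n m_{\max}\|\mb\Sigma_\Phi^{-1}\|\,\|(\mb X\mb\Phi)^T\mb\eta\|$; because $\sum_t(\mb\Phi^T\mb x_t)\eta_t$ is a bounded martingale, $\|\mb X^T\mb\eta\|$ is controlled by an Azuma/Hoeffding argument with a coordinate-wise union bound over the $D$ features exploiting $k$-sparsity, producing the $\sqrt{2k\log(2D/\xi_3)}$ factor of the second displayed line. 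For the remaining part I would write $\mb\Sigma_{\Phi,n}^{-1}-\mb\Sigma_\Phi^{-1}=-\mb\Sigma_{\Phi,n}^{-1}(\mb\Sigma_{\Phi,n}-\mb\Sigma_\Phi)\mb\Sigma_\Phi^{-1}$, bound $\|\mb\Sigma_{\Phi,n}-\mb\Sigma_\Phi\|$ by a matrix concentration inequality valid under the mixing assumption (yielding $\sqrt{d^3/n^3}$, the extra power of $\|\mb\Sigma_\Phi^{-1}\|$, and $\log(c_4 d^2/\xi_4)$), use $\|\mb\Sigma_{\Phi,n}^{-1}\|\le2\|\mb\Sigma_\Phi^{-1}\|$ on the event that the two covariances are close, and apply Azuma once more (giving $\log(2d/\xi_5)$) to $\|(\mb X\mb\Phi)^T\mb\eta\|$ in the projected space; the remaining cross terms are $\tilde O(n^{-2})$. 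Combining the three lines and union-bounding over the five failure events $\xi_1,\dots,\xi_5$ finishes the proof.

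The main obstacle is the concentration analysis for data from a single trajectory rather than i.i.d.\ samples: the empirical covariance $\mb\Sigma_{\Phi,n}$ is not a martingale average about $\mb\Sigma_\Phi$, so controlling $\|\mb\Sigma_{\Phi,n}-\mb\Sigma_\Phi\|$ forces the use of the ``uniformly quickly forgets its past'' hypothesis and is where the constants $c_1,\dots,c_4$ enter; obtaining the sharpened $\sqrt{n/d}+\sqrt[4]{\cdot}$ form of the bias-propagation term (rather than the naive $\sqrt n$) is the other delicate point. By contrast the pure noise term is comparatively routine, since $\sum_t(\mb\Phi^T\mb x_t)\eta_t$ is an honest martingale and needs no mixing.
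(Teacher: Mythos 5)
Your proposal follows essentially the same route as the paper's proof: the identical three-way decomposition into the direct projection bias (Lemma~\ref{lemma:linsparse}), the regression-to-bias term $(\mb X\mb\Phi)^\dagger\mb b$ with $\mb b=\mb X\mb w-\mb X\mb\Phi\mb\Phi^T\mb w$, and the regression-to-noise term $(\mb X\mb\Phi)^\dagger\mb\eta$, with the same martingale-plus-sparsity union bound over the $D$ basis directions and the same mixing-based covariance expansion $\mb\Sigma_{\Phi,n}^{-1}\approx\mb\Sigma_\Phi^{-1}-\mb\Sigma_\Phi^{-1}(\mb\Sigma_{\Phi,n}-\mb\Sigma_\Phi)\mb\Sigma_\Phi^{-1}$ for the higher-order corrections. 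The only place you are looser than the paper is the empirical-to-$\rho$-norm conversion in the bias term, which the paper carries out via OLS optimality ($\|(\mb x^T\mb\Phi)\mb w_{\mb X}\|_n\le 2b_{\max}$) followed by an explicit $(3/\epsilon)^d$-covering of the unit ball in $\md R^d$ and Lemma~\ref{lemma:mixing}, but your ``concentration of the empirical covariance'' stand-in is the same idea.
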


Detailed proof is included in the appendix. The sketch of the proof is as follows: Lemma~\ref{lemma:linsparse} suggests that if the Bellman error is linear in the original features, the bias due to the projection can be bounded within a controlled constant error with logarithmic size projections (first line in the bound). If the Markov chain ``forgets'' exponentially fast, one can bound the \emph{on-measure} variance part of the error by a constant error with similar sizes of sampled transitions~\cite{samson2000concentration} (second and third line of the bound).

Theorem~\ref{theorem:cbebf} can be further simplified by using concentration bounds on random projections as defined in Eqn~\ref{label:rndproj}. The norm of $\mb \Phi$ can be bounded using the bounds discussed in~\cite{candes2006near}; we have with probability $1-\delta_{\Phi}$:
\begin{eqnarray*}
\| \mb \Phi \| \leq \sqrt{D/d} + \sqrt{(2 \log (2/\delta_\Phi))/d} + 1 \;\;\;\; \text{ and }\\ \;\;\;\; \| \mb \Phi^\dagger \| \leq \left[ \sqrt{D/d} - \sqrt{(2 \log (2/\delta_\Phi))/d} - 1 \right]^{-1}.
\end{eqnarray*}
Similarly, when $n > d$, and the observed features are well-distributed, we expect that $\| \mb X \mb \Phi \|$ is of order $\tilde O(\sqrt{n/d})$ and $\| (\mb X \mb \Phi)^{\dagger} \|$ is of order $\tilde O(\sqrt{d/n})$. Also note that the projections are norm-preserving and thus $m_{\max} \simeq 1$. We also have $\left\|\mb \Sigma^{-1}_\Phi\right\| \leq d$.
Assuming that $n \gg d$, we can rewrite the bound on the error up to logarithmic terms as:
\begin{align}
\tilde O\left( \sqrt{k \log D} \left( \| \mb w \|   \sqrt{\frac{1}{d}} + \delta_{\max} \sqrt{\frac{d}{n}} \right) \right) + \tilde O\left(\frac{d^3}{n}\right).
\end{align}
The $1/\sqrt d$ term is a part of the bias due to the projection (excess approximation error). The $\sqrt{d/n}$ term is the variance term that shrinks with larger training sets (estimation error). We clearly observe the trade-off with respect to the compressed dimension $d$. With the assumptions discussed above, we can see that projection of size $d = \tilde O(k \log D)$ should be enough to guarantee arbitrarily small bias, as long as $\| \mb w \|$ is small and $n = \tilde O(d^3)$ holds\footnote{Our crude analysis assumes that $n=\tilde O(d^3)$. We expect that this can be further brought down to $\tilde O(d^2)$ which is $\tilde O((k \log D)^2)$ by our choice of $d$.}.

The following two lemmas complete the proof on the shrinkage of the error in the value function prediction:

%Note: we have angular error with respect to the maximum, not the norm. i.e. |behat - be|_rho < epsilon*max(be). we want the lp form of it: |behat - be|_rho < epsilon * |be|_rho

\begin{lemma} \label{lemma:errcontract}
Let $V^\pi$ be the value function of a policy $\pi$ imposing stationary measure $\rho$, and let $e_V$ be the Bellman error under policy $\pi$ for an estimate $V$. Given a BEBF $\psi$ satisfying:
\begin{eqnarray}
\left\| \psi(\mb x) - e_V(\mb x) \right\|_{\rho(\mb x)} \leq \epsilon \left\| e_V(\mb x) \right\|_{\rho(\mb x)},
\end{eqnarray}
we have that: 
\begin{eqnarray}
\left\| V^\pi(\mb x) - (V(\mb x) + \psi(\mb x)) \right\|_{\rho(\mb x)} \leq (\gamma + \epsilon + \epsilon \gamma) \left\| V^\pi(\mb x) - V(\mb x)\right\|_{\rho(\mb x)}.
\end{eqnarray}
\end{lemma}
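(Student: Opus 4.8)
The plan is to decompose the residual $V^\pi-(V+\psi)$ into a part that comes from the \emph{true} Bellman error (which the Bellman operator contracts by $\gamma$, since $V^\pi$ is its fixed point) and a part that comes from the approximation error $\psi-e_V$ (which is controlled by the hypothesis of the lemma). Introduce the conditional-expectation operator $(P^\pi f)(\mb x)=\int f(\mb y)\,T(d\mb y\,|\,\mb x,\pi(\mb x))$, so that $\mathcal T V - \mathcal T V' = \gamma P^\pi(V-V')$ for any $V,V'$, and recall that $\mathcal T V^\pi = V^\pi$.

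First I would write $V^\pi-(V+\psi) = \bigl(V^\pi-V-e_V\bigr) + \bigl(e_V-\psi\bigr)$ and simplify the first bracket using the fixed-point identity:
\[
V^\pi-V-e_V = V^\pi-V-(\mathcal T V - V) = V^\pi - \mathcal T V = \mathcal T V^\pi - \mathcal T V = \gamma\,P^\pi(V^\pi-V).
\]
By the triangle inequality in $\|\cdot\|_{\rho(\mb x)}$ this gives
\[
\left\|V^\pi-(V+\psi)\right\|_{\rho(\mb x)} \;\le\; \gamma\left\|P^\pi(V^\pi-V)\right\|_{\rho(\mb x)} + \left\|e_V-\psi\right\|_{\rho(\mb x)}.
\]
Next I would invoke the standard fact that, because $\rho$ is the stationary measure of the chain induced by $\pi$, the operator $P^\pi$ is a non-expansion in the $\rho$-weighted $L^2$ norm: applying Jensen's inequality pointwise gives $(P^\pi f)^2 \le P^\pi(f^2)$, and stationarity gives $\int P^\pi(f^2)\,d\rho = \int f^2\,d\rho$, hence $\|P^\pi f\|_{\rho(\mb x)} \le \|f\|_{\rho(\mb x)}$. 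This bounds the first term by $\gamma\|V^\pi-V\|_{\rho(\mb x)}$. For the second term, the assumption on $\psi$ gives $\|e_V-\psi\|_{\rho(\mb x)} \le \epsilon\|e_V\|_{\rho(\mb x)}$, so it remains to bound $\|e_V\|_{\rho(\mb x)}$. Using the same identity, $e_V = \mathcal T V - V = (\mathcal T V - \mathcal T V^\pi) + (V^\pi - V) = -\gamma P^\pi(V^\pi-V) + (V^\pi-V)$, and the non-expansion property yields $\|e_V\|_{\rho(\mb x)} \le (1+\gamma)\|V^\pi-V\|_{\rho(\mb x)}$. Combining the two pieces:
\[
\left\|V^\pi-(V+\psi)\right\|_{\rho(\mb x)} \;\le\; \gamma\|V^\pi-V\|_{\rho(\mb x)} + \epsilon(1+\gamma)\|V^\pi-V\|_{\rho(\mb x)} = (\gamma+\epsilon+\epsilon\gamma)\|V^\pi-V\|_{\rho(\mb x)}.
\]

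The only nontrivial ingredient is the non-expansiveness of $P^\pi$ under the stationary measure; the rest is algebra with the fixed-point identity $\mathcal T V^\pi=V^\pi$ and the triangle inequality. This is essentially the contraction argument of \cite{parr2007analyzing}; the proposal here just carries the auxiliary bound $\|e_V\|_{\rho(\mb x)}\le(1+\gamma)\|V^\pi-V\|_{\rho(\mb x)}$ through the computation so that the explicit constant $\gamma+\epsilon+\epsilon\gamma$ appears. I expect the write-up to be short, with the only real care needed in stating the $P^\pi$ non-expansion cleanly for possibly infinite state spaces (it follows from conditional Jensen plus invariance of $\rho$), and in keeping track of signs when rewriting $e_V$ via the Bellman operator.
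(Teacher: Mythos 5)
Your proof is correct and follows essentially the same route as the paper's: the same decomposition $V^\pi-(V+\psi)=(V^\pi-\mathcal T V)+(e_V-\psi)$, the $\gamma$-contraction of $\mathcal T$ in the $\rho$-weighted $L^2$ norm for the first term, and the auxiliary bound $\|e_V\|_{\rho(\mb x)}\le(1+\gamma)\|V^\pi-V\|_{\rho(\mb x)}$ for the second. The only difference is that you prove the contraction property of $\mathcal T$ under the stationary measure (via conditional Jensen and invariance of $\rho$) where the paper simply cites it, which is a welcome but inessential addition.
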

Theorem~\ref{theorem:cbebf} does not state the error in terms of $\left\| e_V(\mb x) \right\|_{\rho(\mb x)}$, but rather does it in term of the infinity norm $e_{\max}$. We expect a more careful analysis to give us a bound that could benefit directly from Lemma~\ref{lemma:errcontract}. However, we can still state the following immediate lemma about the contraction in the error:
\begin{lemma} \label{lemma:errcontractorsmall}
Let $V^\pi$ be the value function of a policy $\pi$ imposing stationary measure $\rho$, and let $e_V$ be the Bellman error under policy $\pi$ for an estimate $V$. Given a BEBF $\psi$ satisfying:
\begin{eqnarray}
\left\| \psi(\mb x) - e_V(\mb x) \right\|_{\rho(\mb x)} \leq c,
\end{eqnarray}
we have that after adding the BEBF to the estimated value, either the error contracts:
\begin{eqnarray}
\left\| V^\pi(\mb x) - (V(\mb x) + \psi(\mb x)) \right\|_{\rho(\mb x)} < \left\| V^\pi(\mb x) - V(\mb x)\right\|_{\rho(\mb x)},
\end{eqnarray}
or the error is already small:
\begin{eqnarray}
\left\| V^\pi(\mb x) - V(\mb x)\right\|_{\rho(\mb x)} \leq \frac{(1+\gamma)}{(1-\gamma)^2}c.
\end{eqnarray}
\end{lemma}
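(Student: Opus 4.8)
The plan is to reduce this to Lemma~\ref{lemma:errcontract} by choosing the relative error $\epsilon$ adaptively. First, dispose of the degenerate case $\left\| e_V(\mb x) \right\|_{\rho(\mb x)} = 0$: since $e_V = \mathcal T V - V$, having zero Bellman error ($\rho$-a.e.) forces $V = V^\pi$ on the support of $\rho$, so $\left\| V^\pi(\mb x) - V(\mb x)\right\|_{\rho(\mb x)} = 0$ and the ``error is already small'' alternative holds trivially. Otherwise set $\epsilon \eqdef c / \left\| e_V(\mb x) \right\|_{\rho(\mb x)} > 0$. Then the hypothesis $\left\| \psi(\mb x) - e_V(\mb x) \right\|_{\rho(\mb x)} \le c = \epsilon \left\| e_V(\mb x) \right\|_{\rho(\mb x)}$ of Lemma~\ref{lemma:errcontract} is met, giving $\left\| V^\pi(\mb x) - (V(\mb x)+\psi(\mb x)) \right\|_{\rho(\mb x)} \le (\gamma + \epsilon + \epsilon\gamma)\left\| V^\pi(\mb x) - V(\mb x)\right\|_{\rho(\mb x)}$.

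Next I would split on the size of $\epsilon$. If $\epsilon < \tfrac{1-\gamma}{1+\gamma}$, then $\gamma + \epsilon + \epsilon\gamma = \gamma + \epsilon(1+\gamma) < \gamma + (1-\gamma) = 1$, which is exactly the contraction conclusion. In the complementary case $\epsilon \ge \tfrac{1-\gamma}{1+\gamma}$, unwinding the definition of $\epsilon$ gives $\left\| e_V(\mb x) \right\|_{\rho(\mb x)} \le \tfrac{1+\gamma}{1-\gamma}\,c$, and it only remains to turn this into the stated bound on $\left\| V^\pi(\mb x) - V(\mb x)\right\|_{\rho(\mb x)}$.

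For that last step I would use the identity $e_V = \mathcal T V - V = (\mathcal T V - \mathcal T V^\pi) + (V^\pi - V) = (I - \gamma P^\pi)(V^\pi - V)$, where $(P^\pi f)(\mb x) = \int f(\mb y)\, T(d\mb y|\mb x,\pi(\mb x))$. Since $\rho$ is the stationary distribution of $P^\pi$, Jensen's inequality yields $\|P^\pi f\|_{\rho} \le \|f\|_{\rho}$, so $(I-\gamma P^\pi)^{-1} = \sum_{j\ge 0}\gamma^j (P^\pi)^j$ is a bounded operator of $\rho$-norm at most $\tfrac{1}{1-\gamma}$. Hence $\left\| V^\pi(\mb x) - V(\mb x)\right\|_{\rho(\mb x)} = \big\|(I-\gamma P^\pi)^{-1} e_V\big\|_{\rho(\mb x)} \le \tfrac{1}{1-\gamma}\left\| e_V(\mb x) \right\|_{\rho(\mb x)} \le \tfrac{1+\gamma}{(1-\gamma)^2}\,c$, which is the claimed inequality.

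The only real content beyond bookkeeping is this final estimate: one must justify that $(I - \gamma P^\pi)^{-1}$ is bounded by $1/(1-\gamma)$ in the $\rho$-weighted $L^2$ norm, and the single fact that makes this work is that $\rho$ is $P^\pi$-stationary, so $P^\pi$ is a non-expansion in $\|\cdot\|_{\rho(\mb x)}$. Some care is also needed to make $\epsilon$ well defined, which is why the $\|e_V\|_{\rho} = 0$ case is handled first; everything else is the elementary case analysis above.
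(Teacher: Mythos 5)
Your proposal is correct and follows essentially the same route as the paper: set $\epsilon = c/\left\| e_V(\mb x) \right\|_{\rho(\mb x)}$, invoke Lemma~\ref{lemma:errcontract}, split on whether $\gamma + \epsilon(1+\gamma) < 1$, and convert the resulting bound on $\left\| e_V(\mb x) \right\|_{\rho(\mb x)}$ into one on $\left\| V^\pi(\mb x) - V(\mb x)\right\|_{\rho(\mb x)}$ via the factor $\tfrac{1}{1-\gamma}$. The only differences are cosmetic: the paper obtains that last factor by a triangle inequality plus the $\gamma$-contraction of $\mathcal T$ in $\|\cdot\|_{\rho(\mb x)}$ and rearranging, whereas you use the equivalent Neumann-series bound on $(I-\gamma P^\pi)^{-1}$, and you additionally (and correctly) dispose of the degenerate case $\left\| e_V(\mb x) \right\|_{\rho(\mb x)}=0$, which the paper leaves implicit.
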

This means that if we can control the error in BEBFs by some small constant, we can shrink the error up to a factor of that constant.

\section{Empirical Analysis}
We evaluate our method on a challenging domain where the goal of the RL agent is to apply direct electrical neurostimulation such as to suppress epileptiform behavior in neural tissues. We use a generative model constructed from real-world data collected on slices of rat brain tissues~\citep{bush:2009}; the model is available in the RL-Glue framework. Observations are generated over a $5$-dimensional real-valued state space. The discrete action choice corresponds to selecting the frequency at which neurostimulation is applied. The model is observed at $5$ steps per second. The reward is $0$ for steps when a seizure is occurring at the time of stimulation, $1/41$ for when seizure happens without stimultion, $40/41$ for each stimulation pulse, and $1$ otherwise~\footnote{The choice of the reward model is motivated by medical considerations. See~\cite{bush:2009}.}.

One of the challenges of this domain is that it is difficult to know \textit{a priori} how to construct a good state representation.  We use tile-coding to convert the continuous variables into a high dimensional binary feature space. We encode the policy as a $6$th feature, divide each dimension into $6$ tiles and use $10$ randomly placed tile grids. That creates $10 \times 6^6 = 466,560$ features. Only $10$ of these are non-zero at any point, thus $k=10$.

We apply the best clinical fixed rate policy (stimulation is applied at a consistent 1Hz) to collect our sample set~\citep{bush:2009}. Since the true value function is not known for this domain, we thus define our error in terms of Monte Carlo returns on a separate test set. Give a test set of size $l$, Monte Carlo returns are defined to be the discounted sum of rewards observed at each point, denoted by $U(\mb x_i)$. Now for any estimated value function $V$, we define the \emph{return prediction error} (RP error) to be $\sqrt{ \frac{1}{l} \sum^l_{i=1} \left(U(\mb x_i) - V(\mb x_i)\right)^2}$.

In our first experiment, we analyze the RP error as a function of the number of generated BEBFs, for different selections of the size of projection $d$. We run these experiments with two sample sizes: $500$ and $1500$. The projection sizes are either $10$, $20$ or $30$. Fixing $d$, we apply many iterations of the algorithm and observe the RP error on a testing set of size $l=5000$. To account for the randomness induced by the projections, we run these experiments 10 times each, and take the average. Figure~\ref{exp1fig} includes the results under the described setting.

It can seen in both plots in Figure~\ref{exp1fig}, that the RP error decreases to some minimum value after a number of BEBFs are generated, and then the error start increasing slightly when more BEBFs are added to the estimate. The increase is due to over-fitting and can be easily avoided by cross-validation. As stated before, this work does not include any analysis on the optimal number of iterations. Discussion on the possible methods for such optimization is an interesting avenue of future work.

\begin{figure}[ht]
\begin{center}
\includegraphics[scale = 0.5]{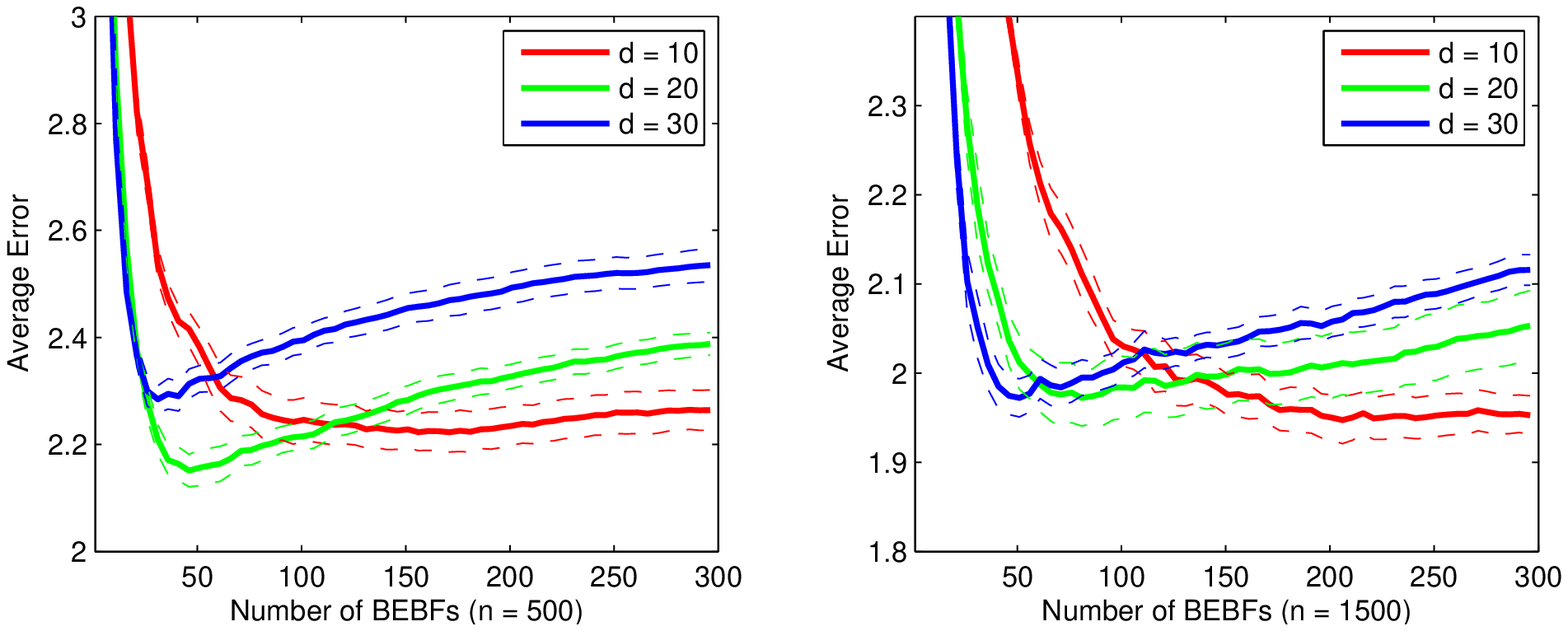}
\end{center}
\vspace{-10px}
\caption{\label{exp1fig} RP error of CBEBF for different number of projections, under different choices of $d$,}

\vspace{-5px}\center{averaged over 10 trials. The dashed lines indicate $\pm 1$-STD of the mean.}
\end{figure}

As expected, the optimal number of BEBFs depend heavily on the size of the projection: the smaller the projection, the more BEBFs need to be added. It is interesting to note that even though the minimum happens at different places, the value of the minimum RP error is not varying much as a function of the projection size. The difference gets even smaller with larger sample sizes. This means that the method is relatively robust with respect to the choice of $d$. We also observed small variance in the value of the optimal RP error, further confirming the robustness of the algorithm on this domain.

There are only a few methods that can be compared against our algorithm due to the high dimensional feature space. Direct regression on the original space with LSTD type algorithms (regularized or otherwise) is impossible due to the computational complexity\footnote{Analysis of sparse linear solvers, such as LSQR~\cite{paige1982lsqr}, is an interesting future work.}. We expect most feature \emph{selection} methods to perform poorly here, since all the features are of small and equal importance (note the different type of sparsity we assume in our work). The two main alternatives are randomized feature extraction (e.g. LSTD with random projections~\cite{ghavamzadeh2010lstd}) and online stochastic gradient type methods (e.g. GQ ($\lambda$) algorithm~\cite{maei2010gq}).

LSTD with random projections (Compressed LSTD, CLSTD), discussed in \cite{ghavamzadeh2010lstd}, is a simple algorithm in which one applies random projections to reduce the dimension of the state space to a manageable size, and then applies LSTD on the compressed space. We compare the RP error of CLSTD against our method. Among the gradient type methods, we chose the GQ ($\lambda$) algorithm~\cite{maei2010gq}, as it was expected to provide good consistency. However, since the algorithm was very sensitive to the choice of the learning rate schedule, the initial guess of the weight vector and the $\lambda$ parameter, we failed to tune it to outperform even the CLSTD. The results on the GQ ($\lambda$) algorithm are thus excluded from this section and should be addressed in future works~\footnote{A fair comparison cannot be made with gradient type methods in the absence of a good learning rate schedule. Typical choices were not enough to provide decent results.}.

\begin{figure}[ht]
\begin{center}
\includegraphics[scale = 0.6]{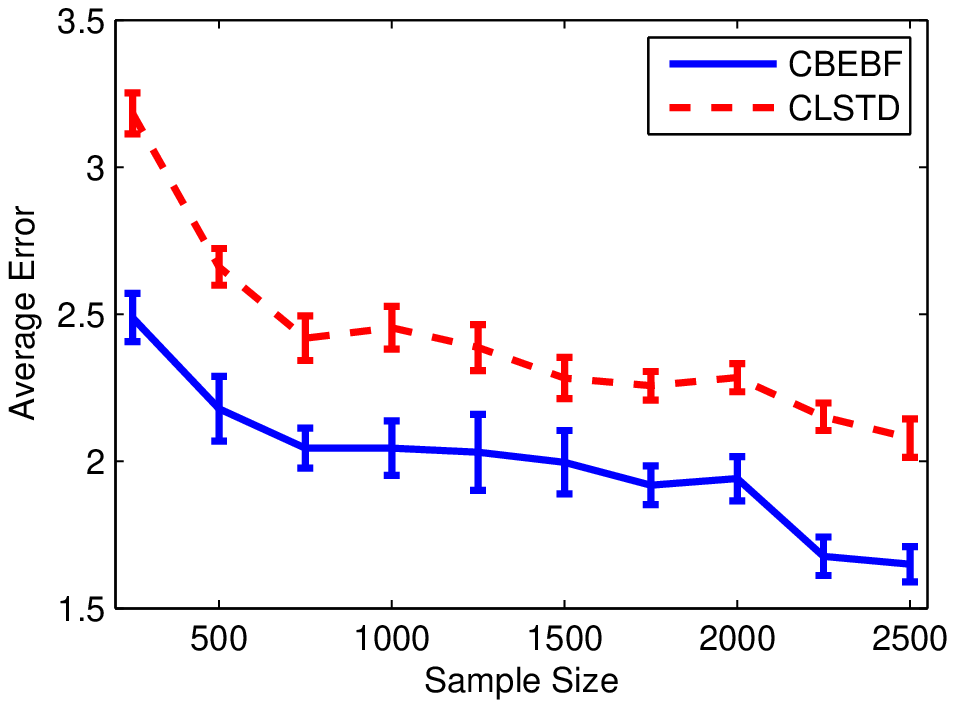}
\end{center}
\vspace{-10px}
\caption{\label{exp2fig} RP error of CBEBF vs. CLSTD for different sample sizes,}
\vspace{-5px}\center{averaged over 10 trials. The error bars are $\pm 1$-STD of the mean.}
\end{figure}

For a fair comparison between CBEBF and CLSTD, we assumed the existence of an oracle that would choose the optimal parameters for these method~\footnote{Note that since there are one or two parameters for these methods, cross-validation should be enough to choose the optimal parameter, though for simplicity the discussion of that is left out of this work.}. Therefore, we compare the best RP error on the testing set as we vary the parameters in question. Figure~\ref{exp2fig} shows the best RP error of the algorithms. For CLSTD, the best RP error is chosen among the solutions with varying projection sizes (extensive search). For CBEBF, we fix the projection size to be $20$, and vary the number of generated BEBFs (iteratively) to find the optimal number of iterations that minimizes the RP error.

As seen in Figure~\ref{exp2fig}, our method consistently outperforms CLSTD with a large margin, and the results are more robust with smaller variance. Comparing with the results presented in Figure~\ref{exp1fig}, even the over-fitted solutions of CBEBF seem to outperform the best results of CLSTD.

Each run of our algorithm with hundreds of BEBFs takes one or two minutes when working with thousands of samples and half a million features. The algorithm can easily scale to run with larger sample sizes and higher dimensional spaces, though a comparison cannot be made with CLSTD, since CLSTD (with optimal sizes of projection) fails to scale with increasing number of samples and dimensions.

\section{Discussion}
In this work, we provide a simple, fast and robust feature extraction algorithm for policy evaluation in sparse and high dimensional state spaces. Using recent results on the properties of random projections, we prove that in sparse spaces, random projections of sizes logarithmic in the original dimension are enough to preserve the linearity. Therefore, BEBFs can be generated on compressed spaces induced by small random projections. Our finite sample analysis provides guarantees on the reduction of error after the addition of the discussed BEBFs.

Empirical analysis on a high dimensional space with unknown value function structure shows that CBEBF vastly outperforms LSTD with random projections and easily scales to larger problems. It is also more consistent in the output and has a much smaller memory complexity. We expect this behaviour to happen under most common state spaces. However, more empirical analysis should be done to confirm such hypothesis. Since the focus of this work is on feature extraction with minimal domain knowledge using agnostic random projections, we avoided the commonly used problem domains with known structures in the value function (e.g. mountain car~\cite{sutton98}).

Compared to other regularization approaches to RL~\cite{kolter2009regularization,farahmand2010nips,parr2010nips}, our random projection method does not require complex optimization, and thus is faster and more scalable.

Of course finding the optimal choice of the projection size and the number of iterations is an interesting subject of future research.  We expect the use of cross-validation to suffice for the selection of the optimal parameters due to the robustness in the choice of values. A tighter theoretical bound might also help provide an analytical closed form answer to these questions.

Our assumption of the linearity of the Bellman error in the original space might be too strong for some state spaces. We avoided non-linearity in the original space to simplify the analysis. However, most of the discussions can be rephrased in terms of the \emph{projected Bellman error} to provide more general results (e.g. see~\cite{parr2007analyzing}).

\newpage

\begin{huge}\begin{center}\textbf{Appendix}\end{center}\end{huge}\vspace{1em}

We start with concentration bound on the rapidly mixing Markov processes. These will be used to bound the variance of approximations build upon the observed values.

\section{Concentration Bounds for Mixing Chains}
We give an extension of Bernstein's inequality based on~\cite{samson2000concentration}.

Let $\mb x_1, \ldots, \mb x_n$ be a time-homogeneous Markov chain with transition kernel $T(\cdot|\cdot)$ taking values in some measurable space $\mathcal X$.
We shall consider the concentration of the average of the Hidden-Markov Process 
\[(\mb x_1,f(\mb x_1)),\ldots,(\mb x_n,f(\mb x_n)),\]
 where $f:\mathcal X \rightarrow [0,b]$ is a fixed measurable function. 
To arrive at such an inequality, we need a characterization of how fast $(\mb x_i)$ forgets its past.

For $i>0$, let $T^i(\cdot|x)$ be the $i$-step transition probability kernel: $T^i(A|\mb x) = \Pr \{\mb x_{i+1}\in A\,|\,\mb x_1=\mb x\}$ (for all $A\subset \mathcal X$ measurable).
Define the upper-triangular matrix $\mb \Gamma_n=(\gamma_{ij})\in \real^{n\times n}$  as follows:

\begin{eqnarray}
\label{eq:gammadef}
	\gamma_{ij}^2 = \sup_{(\mb x, \mb y) \in \mathcal X^2} 
	        \| T^{j-i}(\cdot|\mb x) - T^{j-i}(\cdot|\mb y) \|_\text{TV}.
\end{eqnarray}

for $1\leq i < j \leq n$ and let $\gamma_{ii}=1$ ($1\le i \le n$).

\newcommand{\Id}{\mathbf{I}}
\newcommand{\norm}[1]{\left\Vert#1\right\Vert}
\newcommand{\Prob}[1]{{\mathbb P}\left(#1\right)}    % Probabilities; example: \Prob{X>\eps}<1-\delta
\newcommand{\EE}[1]{{\mathbb E}\left[#1\right]}      % Expectations

Matrix $\mb \Gamma_n$, and its operator norm $\norm{\mb \Gamma_n}$ w.r.t. the Euclidean distance, are the measures of dependence for the random sequence $\mb x_1, \mb x_2, \ldots, \mb x_n$. For example if the $\mb x_i$'s are independent, $\mb \Gamma_n = \Id$ and $\norm{\mb \Gamma_n} = 1$.
In general $\norm{\mb \Gamma_n}$, which appears in the forthcoming concentration inequalities for dependent sequences, can grow with $n$. Since the concentration bounds are homogeneous in $n/\norm{\mb \Gamma_n}^2$, a larger value $\norm{\mb \Gamma_n}^2$ means a smaller ``effective'' sample size.

We say that a time-homogeneous Markov chain \emph{uniformly quickly forgets its past} if
\begin{eqnarray}
\tau=\sup_{n\ge 1}\norm{\mb \Gamma_n}^2<+\infty.
\end{eqnarray}
Further, $\tau$ is called the \emph{forgetting time} of the chain. Conditions under which a  Markov chain uniformly quickly forgets its past are of major interest. For further discussion on this, see~\cite{farahmand2011model}.

The following result from~\cite{farahmand2011model} is a trivial corollary of Theorem~2 of \citet{samson2000concentration} (Theorem~2 is stated for empirical processes and can be considered as a generalization of Talagrand's inequality to dependent random variables):

\begin{theorem}\label{thm:SamsonTheorem2}
Let $f$ be a measurable function on $\mathcal X$ whose values lie in $[0,b]$,
 $(\mb x_i)_{1\le i\le n}$ be a homogeneous Markov chain taking values in $\mathcal X$
 and let $\mb \Gamma_n$ be the matrix with elements defined by~\eqref{eq:gammadef}.
Let
\begin{equation*}
	z =  \frac1n \sum_{i=1}^n f(\mb x_i).
\end{equation*}	
Then, for every $\epsilon \geq 0$,
\begin{align*}
	\Prob{ z - \EE{z} \geq \epsilon } & \leq \exp \left(	-\frac{\epsilon^2\, n}{2b \norm{\mb \Gamma_n}^2 (\EE{z} + \epsilon)}		\right), \\
	\Prob{ \EE{z} - z \geq \epsilon } & \leq \exp \left(	-\frac{\epsilon^2\, n}{2b \norm{\mb \Gamma_n}^2 \EE{z}}		\right).
\end{align*}
\end{theorem}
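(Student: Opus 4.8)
The plan is to obtain the stated inequality as a one-line specialization of Samson's empirical-process concentration inequality (Theorem~2 of~\cite{samson2000concentration}) to a single-element function class, followed by a rescaling from $[0,1]$ to $[0,b]$; this is precisely the reduction recorded in~\cite{farahmand2011model}. Samson's theorem controls the deviations of $\sup_{g\in\mathcal G}\sum_{i=1}^n g(\mb x_i)$ around its mean for a class $\mathcal G$ of measurable $[0,1]$-valued functions, with a Talagrand-type (sub-gamma) tail whose variance proxy is the one from the independent case inflated by the factor $\norm{\mb \Gamma_n}^2$, where $\mb \Gamma_n$ is assembled from the mixing coefficients $\gamma_{ij}$ of~\eqref{eq:gammadef}. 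So the first step is to take $\mathcal G=\{f/b\}$, a singleton: the supremum becomes vacuous, the relevant quantity is $Z:=\sum_{i=1}^n f(\mb x_i)/b=\tfrac{n}{b}\,z$, and its variance proxy is at most $\mathbb E Z=\tfrac nb\EE{z}$ (using, as in the independent case, $\operatorname{Var}(f(\mb x_i)/b)\le\mathbb E[f(\mb x_i)/b]$ summed over $i$, the entire effect of the dependence being carried by $\norm{\mb \Gamma_n}^2$).

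Next, with $\mathcal G$ a singleton, I would read off from Samson's theorem the two one-sided bounds $\Prob{Z-\mathbb E Z\ge t}\le\exp\!\big(-\tfrac{t^2}{2\norm{\mb \Gamma_n}^2(\mathbb E Z+t)}\big)$ and $\Prob{\mathbb E Z-Z\ge t}\le\exp\!\big(-\tfrac{t^2}{2\norm{\mb \Gamma_n}^2\,\mathbb E Z}\big)$, the upper tail carrying the self-bounding correction $+t$ and the lower tail (the corresponding statement of the same theorem) not requiring it. It then only remains to undo the normalization: put $t=\tfrac nb\,\epsilon$, substitute $\mathbb E Z=\tfrac nb\EE{z}$, and simplify
\[
\frac{t^2}{2\norm{\mb \Gamma_n}^2(\mathbb E Z+t)}
=\frac{(n\epsilon/b)^2}{2\norm{\mb \Gamma_n}^2\cdot\tfrac nb\,(\EE{z}+\epsilon)}
=\frac{\epsilon^2\,n}{2b\,\norm{\mb \Gamma_n}^2(\EE{z}+\epsilon)},
\]
and likewise for the lower tail with $\EE{z}$ in place of $\EE{z}+\epsilon$. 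Since $\{Z-\mathbb E Z\ge t\}=\{z-\EE{z}\ge\epsilon\}$, these are exactly the two displayed inequalities, valid for every $\epsilon\ge0$.

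The only genuine work is at the interface with the cited theorem. One must check that the dependence matrix appearing in~\cite{samson2000concentration} coincides, up to the squaring convention, with $\mb \Gamma_n$ as defined here through the total-variation distances $\gamma_{ij}^2=\sup_{\mb x,\mb y}\|T^{j-i}(\cdot|\mb x)-T^{j-i}(\cdot|\mb y)\|_{\text{TV}}$, and that the variance proxy in that theorem is genuinely dominated by $\mathbb E Z$ (rather than a weak variance that would demand a separate estimate), so that the clean constant $2b\norm{\mb \Gamma_n}^2(\EE{z}+\epsilon)$ is what survives the specialization. Once this correspondence is in place, the derivation is just the two substitutions above, and no probabilistic ingredient is needed beyond Samson's inequality itself — which is why the result is a corollary rather than a theorem in its own right.
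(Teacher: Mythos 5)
Your proposal is correct and matches the paper's treatment: the paper gives no independent proof of this statement, presenting it (via \citet{samson2000concentration} and the reduction recorded in the cited model-selection work) as exactly the specialization you describe --- Samson's Theorem~2 applied to the singleton class $\{f/b\}$, followed by the substitutions $t=\tfrac{n}{b}\epsilon$ and $\mathbb{E}Z=\tfrac{n}{b}\,\mathbb{E}[z]$. Your arithmetic checks out, and your closing caveat about matching Samson's dependence matrix to the squaring convention in~\eqref{eq:gammadef} is the right thing to verify.
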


The following is an immediate application of the above theorem:

\begin{lemma}\label{lemma:mixing}
Let $f$ be a measurable function over $\mathcal X$ whose values lie in $[0,b]$. Let $\hat f$ be the empirical average of $f$ over the sample collected on the Markov chain. Under proper mixing conditions for the sample, there exists constants $c_1>0$, $c_2\ge 1$ which depend only on $T$ such that for any $0<\xi<1$, w.p. $1-\xi$:
\begin{eqnarray}
\left| \EE{f} - \hat f \right|  \le b \sqrt{ \frac{c_1}{n} \, \log\left(\frac{c_2}{\xi} \right) }\,.
\end{eqnarray}
\end{lemma}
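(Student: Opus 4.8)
The plan is to read Lemma~\ref{lemma:mixing} off Theorem~\ref{thm:SamsonTheorem2} applied with $z = \hat f$. The mixing hypothesis supplies a finite forgetting time $\tau = \sup_{n\ge 1}\norm{\mb \Gamma_n}^2$, and since $\norm{\mb \Gamma_n}^2$ sits in the \emph{denominator} of the negative exponents in Theorem~\ref{thm:SamsonTheorem2}, replacing it by the larger constant $\tau$ only enlarges (weakens) those probability bounds, which is exactly the direction we need. So the first step is to rewrite both tail inequalities of Theorem~\ref{thm:SamsonTheorem2} with $\tau$ in place of $\norm{\mb \Gamma_n}^2$.

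Next I would simplify the $\EE{z}$-dependent factors. For the lower tail the denominator contains $\EE{z} = \EE{f} \le b$, which is immediate. For the upper tail the denominator contains $\EE{z} + \epsilon$; here I would first dispose of the trivial regime $\epsilon \ge b$, in which the claim holds automatically since $\hat f,\EE{f}\in[0,b]$ forces $|\EE{f}-\hat f|\le b\le\epsilon$. In the remaining regime $\epsilon < b$ we have $\EE{z}+\epsilon \le 2b$ and $\EE{z}\le b\le 2b$, so each tail is at most $\exp\!\big(-\epsilon^2 n/(4 b^2\tau)\big)$, and a union bound over $\{z-\EE{z}\ge\epsilon\}$ and $\{\EE{z}-z\ge\epsilon\}$ gives $\Prob{|\EE{f}-\hat f|\ge\epsilon}\le 2\exp\!\big(-\epsilon^2 n/(4b^2\tau)\big)$.

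Finally I would invert this estimate: setting the right-hand side equal to $\xi$ yields $\epsilon = b\sqrt{(4\tau/n)\log(2/\xi)}$, so the lemma holds with $c_1 = 4\tau$ and $c_2 = 2$ — both depending on the chain only through $\tau$, with $c_2 = 2 \ge 1$ as required. The one genuinely non-mechanical point is handling the $\EE{z}+\epsilon$ term appearing in the denominator of the upper-tail bound of Theorem~\ref{thm:SamsonTheorem2}; I resolve it by the trivial case split on whether $\epsilon\ge b$, after which $\EE{z}+\epsilon\le 2b$ and the two tails collapse to a single sub-Gaussian-type bound. Everything else is substitution and a union bound; a more careful treatment (keeping $\EE{z}$ rather than bounding it by $b$) would improve the constants but not the form of the bound.
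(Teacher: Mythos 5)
Your proof is correct and follows exactly the route the paper intends: the paper gives no explicit proof, stating only that the lemma is ``an immediate application'' of Theorem~\ref{thm:SamsonTheorem2}, and your argument --- bounding $\norm{\mb \Gamma_n}^2$ by the forgetting time $\tau$, disposing of the trivial regime $\epsilon \ge b$, bounding $\EE{z}+\epsilon \le 2b$, taking a union bound over the two tails, and inverting to get $c_1 = 4\tau$, $c_2 = 2$ --- is precisely the omitted computation. No gaps.
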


\section{Proof of The Theorem 2}
\begin{proof}[Proof of Theorem 2]

To begin the proof of the main theorem, first note that we can write the TD-errors as the sum of Bellman errors and some noise term: $\delta_t = e_V(\mb x_t) + \eta_t$. These noise terms form a series of martingale differences, as their expectation is 0 given all the history up to that point:
\begin{eqnarray}
\EE{\eta_t | \mb x_1 \dots \mb x_t, r_1 \dots r_{t-1}} = 0.
\end{eqnarray}

We also have that the Bellman error is linear in the features, thus in vector form:
\begin{eqnarray}
\mb \delta = \mb X \mb w + \eta.
\end{eqnarray}
Using random projections, in the compressed space we have:
\begin{eqnarray}
\mb \delta = (\mb X \mb \Phi) (\mb \Phi^T \mb w) + \mb b + \eta,
\end{eqnarray}
where $\mb b$ is the vector of bias due to the projection. Let $b_{\max} = \epsilon^{(\xi_1)}_{\text{prj}} \|\mb w\|$. We have from Lemma 1 that with probability $1-\xi_1$, for all $\mb x \in \mathcal X$:
\begin{eqnarray}
\left|(\mb x^T \mb \Phi) (\mb \Phi^T \mb w) - e_V(\mb x)\right| =
\left|(\mb x^T \mb \Phi) (\mb \Phi^T \mb w) - \mb x^T \mb w \right| \leq b_{\max}.
\end{eqnarray}
Thus, $\mb b$ is element-wise bounded in absolute value by $b_{\max}$ with high probability.
The weighted $L^2$ error in regression to the TD-error as compared to the Bellman error will thus be:
\begin{eqnarray}
\left\| \mb x^T \mb \Phi \mb w^{(\Phi)}_{\text{ols}} - e_V(\mb x) \right\|_{\rho(\mb x)} \hspace{-10px}
 &=& \hspace{-5px} \left\| (\mb x^T \mb \Phi) (\mb X \mb \Phi)^\dagger [(\mb X \mb \Phi) (\mb \Phi^T \mb w) + \mb b + \eta] - e_V(\mb x) \right\|_{\rho(\mb x)} \nonumber \\
 &=& \hspace{-5px} \left\|  (\mb x^T \mb \Phi) (\mb \Phi^T \mb w) + (\mb x^T \mb \Phi) (\mb X \mb \Phi)^\dagger \mb b + (\mb x^T \mb \Phi) (\mb X \mb \Phi)^\dagger \eta - e_V(\mb x) \right\|_{\rho(\mb x)} \nonumber \\
 &\leq& \hspace{-5px} \left\|  (\mb x^T \mb \Phi) (\mb \Phi^T \mb w) - e_V(\mb x) \right\|_{\rho(\mb x)} \nonumber \\
 && \hspace{-5px} + \left\|  (\mb x^T \mb \Phi) (\mb X \mb \Phi)^\dagger \mb b \right\|_{\rho(\mb x)} 
 + \left\|  (\mb x^T \mb \Phi) (\mb X \mb \Phi)^\dagger \eta \right\|_{\rho(\mb x)} \nonumber \\
 &\leq& \hspace{-5px} b_{\max}
 + \left\|  (\mb x^T \mb \Phi) (\mb X \mb \Phi)^\dagger \mb b \right\|_{\rho(\mb x)}
 + \left\|  (\mb x^T \mb \Phi) (\mb X \mb \Phi)^\dagger \eta \right\|_{\rho(\mb x)}.
\end{eqnarray}
The second term is the regression to the bias, and the third term is the regression to the noise. We present lemmas that bound these terms. The theorem is proved by the application of Lemma~\ref{lemma:bias} and Lemma~\ref{lemma:var}.
\end{proof}

\subsection{Bounding the Regression to Bias Terms}

\begin{lemma} \label{lemma:bias}
Under the conditions and with probability defined in Theorem 2:
\begin{eqnarray}
\left\|  (\mb x^T \mb \Phi) (\mb X \mb \Phi)^\dagger \mb b \right\|_{\rho(\mb x)}
\leq b_{\max} \left( 2 + 2 m_{\max} \left\|(\mb X \mb \Phi)^\dagger\right\|  \left( \sqrt \frac{n}{d} + \sqrt[4]{ c_3 n d \, \log\frac{c_4 \sqrt d}{\xi} } \right) \right).
\end{eqnarray}
\end{lemma}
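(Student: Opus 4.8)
The quantity to bound is $\left\| (\mb x^T \mb \Phi)(\mb X \mb \Phi)^\dagger \mb b \right\|_{\rho(\mb x)}$, where $\mb b$ is the element-wise-bounded (by $b_{\max}$, w.p.\ $1-\xi_1$) bias vector from the projection. The plan is to split this into a term that can be controlled by $m_{\max}$ directly and a term that reflects the mismatch between the empirical $\ell^2$ geometry of the sample and the population $L^2(\rho)$ geometry. First I would write $\mb u = (\mb X \mb \Phi)^\dagger \mb b \in \md R^{d}$, so that the target is $\|\mb x^T \mb \Phi \mb u\|_{\rho(\mb x)}$. Since for every $\mb x \in \mathcal X$ we have $|\mb x^T \mb \Phi \mb u| \le \|\mb x^T \mb \Phi\|\,\|\mb u\| \le m_{\max}\|\mb u\|$, integrating against $\rho$ gives the crude bound $\|\mb x^T \mb \Phi \mb u\|_{\rho(\mb x)} \le m_{\max}\|\mb u\| \le m_{\max}\|(\mb X\mb \Phi)^\dagger\|\,\|\mb b\|$. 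The remaining job is therefore to control $\|\mb b\| = \sqrt{\sum_t b_t^2}$, which is at most $b_{\max}\sqrt n$ deterministically; but this alone would only give the $\sqrt{n/d}$ piece and would miss the additive ``$2 + 2m_{\max}\dots$'' structure, so the split has to be done more carefully.

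The cleaner route, and the one I would actually pursue, is to pass through the empirical norm. Let $\|g\|_{n}^2 = \frac1n\sum_t g(\mb x_t)^2$ denote the empirical $L^2$ norm on the sample. Then $\|\mb x^T \mb \Phi \mb u\|_{n}^2 = \frac1n \|\mb X \mb \Phi \mb u\|^2 = \frac1n \|\mb X\mb \Phi(\mb X\mb \Phi)^\dagger \mb b\|^2 \le \frac1n\|\mb b\|^2 \le b_{\max}^2$, using that $\mb X\mb \Phi(\mb X\mb \Phi)^\dagger$ is an orthogonal projector and $\mb b$ is elementwise bounded. So on the \emph{empirical} measure the regression-to-bias is already bounded by $b_{\max}$. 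The task reduces to converting $\|\cdot\|_n$ to $\|\cdot\|_{\rho}$ for the specific function $h(\mb x) = \mb x^T\mb \Phi\mb u$. I would do this by a uniform-deviation / concentration argument: the function class $\{\mb x \mapsto \mb x^T\mb \Phi\mb v : \|\mb v\|\le 1\}$ restricted to the $k$-sparse bounded set has controlled complexity (covering numbers of order $\tilde O(d)$, or one can union-bound over a net of size $(c_4\sqrt d/\xi)$ as the $\log\frac{c_4\sqrt d}{\xi}$ term suggests), and for each fixed $\mb v$ the function $(\mb x^T\mb \Phi\mb v)^2$ is bounded by $m_{\max}^2$, so Lemma~\ref{lemma:mixing} (Bernstein for mixing chains) gives $\big|\,\|h\|_\rho^2 - \|h\|_n^2\,\big| \le m_{\max}^2\sqrt{\frac{c_3}{n}\log\frac{c_4\sqrt d}{\xi}}$ up to rescaling, uniformly over the net. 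Combining, $\|h\|_\rho^2 \le b_{\max}^2 + m_{\max}^2\|\mb u\|^2\sqrt{\frac{c_3}{n}\log\frac{c_4\sqrt d}{\xi}}$, and then $\|\mb u\| \le \|(\mb X\mb \Phi)^\dagger\|\,b_{\max}\sqrt n$, so $\|h\|_\rho^2 \le b_{\max}^2\big(1 + n m_{\max}^2\|(\mb X\mb \Phi)^\dagger\|^2\sqrt{\frac{c_3}{n}\log\frac{c_4\sqrt d}{\xi}}\big)$. Taking square roots and using $\sqrt{a+b}\le\sqrt a+\sqrt b$ yields $\|h\|_\rho \le b_{\max} + b_{\max} m_{\max}\|(\mb X\mb \Phi)^\dagger\|\sqrt[4]{c_3 n^2 \cdot \frac1n\log\frac{c_4\sqrt d}{\xi}} = b_{\max} + b_{\max} m_{\max}\|(\mb X\mb \Phi)^\dagger\|\sqrt[4]{c_3 n\log\frac{c_4\sqrt d}{\xi}}$, which is of the stated form; the factors of $2$ and the extra $\sqrt{n/d}$ inside the parenthesis absorb the slack from bounding $\|h\|_n$ against $b_{\max}$ (that step actually gives $\le b_{\max}\sqrt{d/n}\cdot\sqrt{n/d}$-type refinements when one notes $\mathrm{rank}(\mb X\mb \Phi)\le d$) and from the square-root splitting, so I would track those constants loosely rather than optimize them.

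The main obstacle is the $\|\cdot\|_n \to \|\cdot\|_\rho$ conversion done \emph{uniformly} over the data-dependent direction $\mb u$: because $\mb u$ depends on $\eta$ and $\mb b$ (hence on the sample), one cannot fix $\mb u$ and apply Lemma~\ref{lemma:mixing} pointwise — one needs the uniform bound over all $\mb v$ in the unit ball of $\md R^d$, which is where the covering-number / net argument and the $\log(c_4\sqrt d/\xi)$ factor enter, and where the mixing assumption (forgetting time $\tau$, hidden in $c_3,c_4$) is essential so that the empirical-process deviation concentrates at rate $n^{-1/2}$. A secondary subtlety is that $m_{\max}$, $\|\mb X\mb \Phi\|$ and $\|(\mb X\mb \Phi)^\dagger\|$ are themselves random; but these are treated as given quantities in the statement of Theorem~\ref{theorem:cbebf} (they appear explicitly in the bound), so I would simply condition on them and fold their failure probabilities into the $\xi_i$'s. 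Everything else is routine: triangle inequality, the projector identity $\|\mb P\mb b\|\le\|\mb b\|$, $\|\mb b\|_\infty\le b_{\max}$, and $\sqrt{a+b}\le\sqrt a+\sqrt b$.
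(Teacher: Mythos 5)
Your proposal follows essentially the same route as the paper's proof: bound the empirical norm of the regression-to-bias by $O(b_{\max})$ using OLS optimality (the paper compares against the zero regressor to get $2b_{\max}$; you use the orthogonal-projector property $\|\mb X\mb\Phi(\mb X\mb\Phi)^\dagger\mb b\|\le\|\mb b\|$ to get $b_{\max}$), then convert $\|\cdot\|_n$ to $\|\cdot\|_{\rho}$ uniformly over the unit ball of $\md R^d$ via an $\epsilon$-net combined with the mixing concentration lemma, and finally control the data-dependent direction through $\|(\mb X\mb\Phi)^\dagger\mb b\|\le b_{\max}\sqrt n\,\|(\mb X\mb\Phi)^\dagger\|$. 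The only discrepancies are bookkeeping ones you already flag as loose: the union bound over the net of size $(3\sqrt d)^d$ is what contributes the factor $d$ inside the fourth root, and the $\sqrt{n/d}$ term arises from the net discretization error $2\epsilon\|\mb w_{\mb X}\|$ with $\epsilon=1/\sqrt d$, not from slack in the empirical-norm step.
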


\begin{proof}
Define $\mb w_{\mb X} = (\mb X \mb \Phi)^\dagger \mb b$. Also define $\|.\|_n$ to be the weighted $L^2$ norm uniform on the sample set $X$:
\begin{eqnarray}
\|f(\mb x)\|^2_n = \frac{1}{n} \sum^n_{i=1} (f(\mb X_i))^2.
\end{eqnarray}

We start by bounding the empirical norm $\|(\mb x^T \mb \Phi) \mb w_{\mb X}\|_n$. Given that $(\mb X \mb \Phi) \mb w_{\mb X}$ is the OLS regression on the observed points, its sum of squared errors should not be greater than any other linear regression, including the vector $0$, thus $\|(\mb x^T \mb \Phi) \mb w_{\mb X} - b(\mb x)\|_n \leq \|b(\mb x)\|_n$. We get:
\begin{eqnarray}
\|(\mb x^T \mb \Phi) \mb w_{\mb X}\|_n \leq \|(\mb x^T \mb \Phi) \mb w_{\mb X} - b(\mb x)\|_n + \|b(\mb x)\|_n \leq 2 \|b(\mb x)\|_n \leq 2 b_{\max}.
\label{emp-bound}
\end{eqnarray}

Let $\mathcal W = \{\mb u \in \md R^d \text{ s.t. } \|\mb u\| \leq 1\}$. Let $S \subset \mathcal W$ be an $\epsilon$-grid cover of $\mathcal W$:
\begin{eqnarray}
\forall \mb v \in \mathcal W \; \exists \mb u \in S: \|\mb u - \mb v\| \leq \epsilon. \label{ep-net}
\end{eqnarray}
It is easy to prove (see e.g. Chapter 13 of \cite{lorentz1996constructive}) that these conditions can be satisfied by choosing a grid of size $|S| \leq (3/\epsilon)^d$ ($S$ fills up the space within $\epsilon$ distance). Applying union bound to Lemma~\ref{lemma:mixing} (let $f(\mb x) = ((\mb x^T \mb \Phi) \mb u)^2$) for all elements in $S$, we get with probability no less than $1-\xi$:
\begin{eqnarray}
\forall \mb u \in S : \|(\mb x^T \mb \Phi) \mb u \|^2_\rho(\mb x) \leq \|(\mb x^T \mb \Phi) \mb u \|^2_n + m^2_{\max} \sqrt{ \frac{c_1}{n} \, \log \frac{c_2 |S|}{\xi} } . \label{net-bound}
\end{eqnarray}

Let $\mb w'_{\mb X} = \mb w_{\mb X}/ \| \mb w_{\mb X} \|$. For any $\mb X$, since $\mb w'_{\mb X} \in \mathcal W$, there exists $\mb w'' \in S$ such that $ \| \mb w'_{\mb X} - \mb w'' \| \leq \epsilon$. Therefore, under event \ref{net-bound} we have:
\begin{eqnarray}
 && \hspace{-3.5em} \left\|  (\mb x^T \mb \Phi) \mb w_{\mb X} \right\|_{\rho(\mb x)} = \left\| \mb w_{\mb X} \right\|   \left\|  (\mb x^T \mb \Phi) \mb w'_{\mb X} \right\|_{\rho(\mb x)}\\
 &\leq& \hspace{-0.7em} \left\| \mb w_{\mb X} \right\|   \left( \left\|  (\mb x^T \mb \Phi) (\mb w'_{\mb X} - \mb w'') \right\|_{\rho(\mb x)}  + \left\|  (\mb x^T \mb \Phi) \mb w'' \right\|_{\rho(\mb x)} \right)\\
 &\leq& \hspace{-0.7em} \left\| \mb w_{\mb X} \right\|   \left( m_{\max} \left\|\mb w'_{\mb X} - \mb w''\right\|  + \left\|  (\mb x^T \mb \Phi) \mb w'' \right\|_n + m_{\max} \sqrt[4]{ \frac{c_1}{n}  \log\frac{c_2 |S|}{\xi} } \right) \label{use-net-bound}\\
 &\leq& \hspace{-0.7em} \left\| \mb w_{\mb X} \right\|   \left( m_{\max} \epsilon  + \left\|  (\mb x^T \mb \Phi) (\mb w'' - \mb w'_{\mb X}) \right\|_n + \left\|  (\mb x^T \mb \Phi) \mb w'_{\mb X} \right\|_n + m_{\max} \sqrt[4]{ \frac{c_1}{n} \log\frac{c_2 |S|}{\xi} } \right) \label{use-ep-net-1}\\
 &\leq& \hspace{-0.7em} \left\| \mb w_{\mb X} \right\|   \left( m_{\max} \epsilon  + m_{\max} \epsilon + \left\|  (\mb x^T \mb \Phi) \mb w'_{\mb X} \right\|_n + m_{\max} \sqrt[4]{ \frac{c_1}{n}  \log\frac{c_2 |S|}{\xi} } \right) \label{use-ep-net-2}\\
 &\leq& \hspace{-0.7em} \left\|  (\mb x^T \mb \Phi) \mb w_{\mb X} \right\|_n + m_{\max} \left\| \mb w_{\mb X} \right\|  \left( 2 \epsilon + \sqrt[4]{ \frac{c_1}{n}  \log\frac{c_2 |S|}{\xi} } \right).
\end{eqnarray}
Line~\ref{use-net-bound} uses Equation~\ref{net-bound}, and we use Equation~\ref{ep-net} in lines~\ref{use-ep-net-1} and \ref{use-ep-net-2}. Using the definition, we have that $\left\| \mb w_{\mb X} \right\| \leq \left\|(\mb X \mb \Phi)^\dagger\right\| b_{\max} \sqrt n$. Thus, using Equation~\ref{emp-bound} we get:
\begin{eqnarray}
\left\|  (\mb x^T \mb \Phi) \mb w_{\mb X} \right\|_{\rho(\mb x)} &\leq& 2 b_{\max} +  m_{\max} \left\|(\mb X \mb \Phi)^\dagger\right\| b_{\max} \sqrt n \left( 2 \epsilon + \sqrt[4]{ \frac{c_1}{n} \, \log\frac{c_2 |S|}{\xi} } \right).
\end{eqnarray}
Setting $\epsilon = 1/\sqrt d$ and substituting $|S|$ we get:
\begin{eqnarray}
\left\|  (\mb x^T \mb \Phi) \mb w_{\mb X} \right\|_{\rho(\mb x)} &\leq& b_{\max} \left( 2 + m_{\max} \left\|(\mb X \mb \Phi)^\dagger\right\|  \left( 2 \sqrt \frac{n}{d} + \sqrt[4]{ c_1 n \, \log\frac{c_2 (3/\epsilon)^d}{\xi} } \right) \right),
\end{eqnarray}
which proves the lemma after simplification.
\end{proof}

\subsection{Bounding the Regression to Noise Terms}
To bound the regression to the noise, we need the following lemma on martingales:
\begin{lemma}\label{lemma:mart}
Let $\mb M$ be a matrix of size $l \times n$, in which column $t$ is a function of $\mb x_t$. Then with probability $1-\xi$ we have:
\begin{eqnarray}
\| \mb M \mb \eta \| \leq \delta_{\max} \| \mb M \|_F \sqrt{2 \log \frac{2l}{\xi}}.
\end{eqnarray}
\end{lemma}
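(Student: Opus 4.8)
The plan is to view each coordinate of $\mb M \mb \eta$ as the terminal value of a scalar martingale, apply a Hoeffding/Azuma-type exponential inequality to it, and then recombine the $l$ coordinates via a union bound together with some Frobenius-norm bookkeeping. Write $\mb M = (M_{i,t})$, so $(\mb M \mb \eta)_i = \sum_{t=1}^n M_{i,t}\,\eta_t$. First I would fix the filtration $\mathcal F_t = \sigma(\mb x_1,\dots,\mb x_{t+1}, r_1,\dots,r_t)$. By hypothesis column $t$ of $\mb M$ depends on $\mb x_t$ alone, so $M_{i,t}$ is $\mathcal F_{t-1}$-measurable; and by the martingale-difference property of the TD-noise noted at the start of the appendix, $\EE{\eta_t \mid \mathcal F_{t-1}} = \EE{\eta_t \mid \mb x_1\dots\mb x_t,\,r_1\dots r_{t-1}} = 0$. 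Hence $(M_{i,t}\eta_t)_{t=1}^n$ is a martingale difference sequence with respect to $(\mathcal F_t)$.

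Next I would record the conditional sub-Gaussianity of the increments. Since $\delta_t$ and $e_V(\mb x_t) = \EE{\delta_t \mid \mb x_t}$ both lie in $[-\delta_{\max},\delta_{\max}]$, we get $|\eta_t| = |\delta_t - e_V(\mb x_t)| \le 2\delta_{\max}$, so conditionally on $\mathcal F_{t-1}$ the variable $\eta_t$ is mean-zero and supported in an interval of length at most $2\delta_{\max}$; Hoeffding's lemma then gives $\EE{e^{s\eta_t} \mid \mathcal F_{t-1}} \le e^{s^2 \delta_{\max}^2/2}$, and because $M_{i,t}$ is $\mathcal F_{t-1}$-measurable, $\EE{e^{s M_{i,t}\eta_t} \mid \mathcal F_{t-1}} \le e^{s^2 M_{i,t}^2 \delta_{\max}^2/2}$. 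Multiplying these bounds along $t$ in the usual way (peeling off the last factor by conditioning on $\mathcal F_{n-1}$, then $\mathcal F_{n-2}$, and so on) yields $\EE{\exp\!\big(s \sum_{t} M_{i,t}\eta_t\big)} \le \exp\!\big(\tfrac12 s^2 \delta_{\max}^2 \sum_t M_{i,t}^2\big)$. A Chernoff bound with the optimal $s$, applied to $\pm\sum_t M_{i,t}\eta_t$, gives for every $\lambda>0$
\[
\Prob{ \Big| \sum_{t=1}^n M_{i,t}\eta_t \Big| \ge \lambda } \;\le\; 2\exp\!\left( -\frac{\lambda^2}{2\delta_{\max}^2 \sum_{t=1}^n M_{i,t}^2} \right).
\]

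Finally I would set $\lambda_i = \delta_{\max}\big(\sum_t M_{i,t}^2\big)^{1/2}\sqrt{2\log(2l/\xi)}$, which makes each right-hand side equal to $\xi/l$, and union-bound over $i=1,\dots,l$: with probability at least $1-\xi$ one has $(\mb M\mb\eta)_i^2 \le 2\delta_{\max}^2 \log(2l/\xi)\sum_t M_{i,t}^2$ simultaneously for all $i$. Summing over $i$ and using $\sum_{i,t} M_{i,t}^2 = \|\mb M\|_F^2$ gives $\|\mb M\mb\eta\|^2 \le 2\delta_{\max}^2 \log(2l/\xi)\,\|\mb M\|_F^2$, and a square root finishes the proof. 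The only genuinely delicate point is the measurability bookkeeping in the first step: the filtration must be lined up so that the coefficient $M_{i,t}$ is known one step before the increment $\eta_t$ is revealed, which is precisely what the assumption ``column $t$ is a function of $\mb x_t$'' provides; everything afterward is the textbook Azuma--Hoeffding argument plus a union bound. (If one prefers to avoid Hoeffding's lemma and instead bound $|M_{i,t}\eta_t|\le 2|M_{i,t}|\delta_{\max}$ and invoke the bounded-difference Azuma inequality, the same statement follows with $\sqrt{2\log(2l/\xi)}$ replaced by a slightly larger constant multiple, which is harmless for the subsequent use in Theorem~\ref{theorem:cbebf}.)
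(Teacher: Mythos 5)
Your proposal is correct and is essentially the paper's own argument, which is stated there only as a two-line sketch: bound each row's inner product with $\mb \eta$ by a martingale concentration inequality with failure probability $\xi/l$, union-bound over the $l$ rows, and sum the squares to recover $\|\mb M\|_F$; your Azuma--Hoeffding derivation supplies exactly the missing details and lands on the stated constant. One tiny wording slip: the crude bound $|\eta_t|\le 2\delta_{\max}$ does not by itself give an interval of length $2\delta_{\max}$ --- the correct justification (which you implicitly use) is that conditionally on $\mathcal F_{t-1}$ the shift $e_V(\mb x_t)$ is constant and $\delta_t$ ranges over $[-\delta_{\max},\delta_{\max}]$, so $\eta_t$ is confined to an interval of length $2\delta_{\max}$, yielding the factor $\delta_{\max}^2/2$ in the exponent.
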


\begin{proof}
The inner product between each row of $\mb M$ and $\mb \eta$ can be bounded by a concentration inequality on martingales each failing with probability less than $\xi/l$. The lemma follows immediately by adding up the inner products.
\end{proof}

The following lemma based on mixing conditions is also needed to bound the variance term.

\begin{lemma}\label{lemma:sigma}
With the conditions of the theorem, with probability $1-\xi_4$, there exists a $\mb Y^{d\times d}$ with all the elements in $[-1,1]$, and thus $\|\mb Y\| \leq d$, such that:
\begin{eqnarray}
\frac{1}{n}  (\mb X \mb \Phi)^T \mb X \mb \Phi =  \mb \Sigma_\Phi + \epsilon_0 \mb Y,
\end{eqnarray}
where $\epsilon_0 = m^2_{\max} \sqrt{\frac{c_3}{n} \log \frac{c_4 d^2}{\xi_4}}$.
\end{lemma}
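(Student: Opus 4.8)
The plan is to read $\frac1n (\mb X \mb \Phi)^T \mb X \mb \Phi$ as the empirical second-moment matrix $\frac1n \sum_{t=1}^n (\mb \Phi^T \mb x_t)(\mb \Phi^T \mb x_t)^T$ of the projected features, observe that its expectation under the stationary distribution $\rho$ is exactly $\mb \Sigma_\Phi$, control the deviation matrix entry by entry with the mixing concentration bound of Lemma~\ref{lemma:mixing}, and then renormalize.

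First I would fix the projection $\mb \Phi$; everything below is deterministic given $\mb \Phi$, so the mixing bounds apply verbatim for each fixed $\mb \Phi$ and then hold for the realized one. For a fixed pair of indices $(i,j)$, the $(i,j)$ entry of the empirical matrix is the empirical average $\hat f_{ij} = \frac1n\sum_t f_{ij}(\mb x_t)$ of the fixed scalar function $f_{ij}(\mb x) = (\mb \Phi^T \mb x)_i(\mb \Phi^T \mb x)_j$ of a single state, whose mean is $(\mb \Sigma_\Phi)_{ij}$ by stationarity. Since $|(\mb \Phi^T \mb x)_i| \le \|\mb \Phi^T \mb x\| \le m_{\max}$ for every $\mb x \in \mathcal X$, we have $|f_{ij}| \le m_{\max}^2$. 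Lemma~\ref{lemma:mixing} is stated for nonnegative functions, so I would apply it to the shifted function $f_{ij} + m_{\max}^2 \in [0, 2m_{\max}^2]$ (the shift cancels between $\hat f_{ij}$ and its mean), obtaining, with probability at least $1-\xi_4/d^2$, the bound $|\hat f_{ij} - (\mb \Sigma_\Phi)_{ij}| \le 2m_{\max}^2 \sqrt{(c_1/n)\log(c_2 d^2/\xi_4)}$.

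A union bound over all $d^2$ entries (or the $d(d+1)/2$ distinct ones by symmetry, which only improves the constant inside the log) then gives that, with probability at least $1-\xi_4$, every entry of $\mb E \eqdef \frac1n(\mb X \mb \Phi)^T \mb X \mb \Phi - \mb \Sigma_\Phi$ has absolute value at most $\epsilon_0 = m_{\max}^2\sqrt{(c_3/n)\log(c_4 d^2/\xi_4)}$, after folding the factor $2$ and the old $c_1,c_2$ into new constants $c_3,c_4$ that still depend only on $T$. Setting $\mb Y \eqdef \mb E/\epsilon_0$ yields the claimed identity with all entries of $\mb Y$ in $[-1,1]$, and hence $\|\mb Y\| \le \|\mb Y\|_F = \sqrt{\sum_{i,j} Y_{ij}^2} \le \sqrt{d^2} = d$.

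I do not see a genuine obstacle; the only points needing care are the sign handling via the shift (Lemma~\ref{lemma:mixing} requires a nonnegative integrand) and the bookkeeping that conditioning on $\mb \Phi$ is harmless, since $m_{\max}$, the functions $f_{ij}$, and $\mb \Sigma_\Phi$ are all deterministic given $\mb \Phi$. Whether the constants are allowed to also absorb $m_{\max}$ is a minor modeling choice; as written the statement keeps $m_{\max}^2$ explicit, consistent with the later remark that $m_{\max}\simeq 1$ for norm-preserving projections.
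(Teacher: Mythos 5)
Your proof is correct and follows essentially the same route as the paper's (entry-wise application of the mixing concentration bound of Lemma~\ref{lemma:mixing} with a union bound over the $d^2$ entries, then normalizing the deviation matrix by $\epsilon_0$); in fact you are more careful than the paper's one-line argument, since you explicitly handle the nonnegativity requirement of Lemma~\ref{lemma:mixing} by shifting and fold the resulting factor of $2$ into the constants $c_3, c_4$.
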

Stated otherwise, if $\mb Y = \frac{1}{\epsilon_0} \left(\frac{1}{n} (\mb X \mb \Phi)^T \mb X \mb \Phi - \mb \Sigma_\Phi \right)$, then with probability $1-\xi_4$, $\mb Y$ is element-wise bounded by $\pm 1$.
\begin{proof}
This is a simple application of Lemma~\ref{lemma:mixing} to all the elements in $\frac{1}{n} (\mb X \mb \Phi)^T \mb X \mb \Phi$ using union bound, as the expectation is $\mb \Sigma$, the chain is mixing and each element of $\mb X \mb \Phi$ is bounded by $m_{\max}$.
\end{proof}

With the above theorem, we can use the Taylor expansion of matrix inversion to have:
\begin{eqnarray}
((\mb X \mb \Phi)^T \mb X \mb \Phi)^{-1} =  \frac{1}{n}  (\mb \Sigma_\Phi + \epsilon_0 \mb Y)^{-1} = \frac{1}{n}  (\mb \Sigma^{-1}_\Phi - \epsilon_0 \mb \Sigma^{-1}_\Phi \mb Y \mb \Sigma^{-1}_\Phi + O(\epsilon^2_0)).
\end{eqnarray}

\begin{lemma} \label{lemma:var}
Under the conditions and probabiliy defined in Theorem 2:
\begin{eqnarray}
\left\|  (\mb x^T \mb \Phi) (\mb X \mb \Phi)^\dagger \eta \right\|_{\rho(\mb x)} &\leq& \frac{\delta_{\max} m_{\max}}{n} \left\|\mb  \Sigma^{-1}_\Phi\right\|   \| \mb X \mb \Phi \| \sqrt {2 k \log \frac{2D}{\xi_3}} \\
&& + \delta_{\max} m^3_{\max} \sqrt{\frac{d^3}{n^3}}  \left\|\mb \Sigma^{-1}_\Phi\right\|^2   \| \mb X \mb \Phi \| \sqrt {2 c_3 \log \frac{c_4 d^2}{\xi_4} \log \frac{2 d}{\xi_5}}\\
&& + \tilde O(n^{-2}).
\end{eqnarray}
\end{lemma}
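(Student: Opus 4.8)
The plan is to expand the ordinary least-squares solution, peel off its leading-order behaviour using the empirical-covariance perturbation of Lemma~\ref{lemma:sigma}, and control each of the remaining martingale-type pieces with Lemma~\ref{lemma:mart}. First I would make a reduction: since $\mb X$ has rank larger than $d$, $\mb X\mb\Phi$ has full column rank and $(\mb X\mb\Phi)^\dagger=((\mb X\mb\Phi)^T\mb X\mb\Phi)^{-1}(\mb X\mb\Phi)^T$, while $\|\mb\Phi^T\mb x\|\le m_{\max}$ for $\rho$-almost every $\mb x$ gives $\|(\mb x^T\mb\Phi)\mb v\|_{\rho(\mb x)}\le m_{\max}\|\mb v\|$ for any fixed $\mb v\in\md R^d$. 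Hence it is enough to bound $\big\|((\mb X\mb\Phi)^T\mb X\mb\Phi)^{-1}(\mb X\mb\Phi)^T\eta\big\|$ and multiply by $m_{\max}$.

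Next I would insert the Neumann/Taylor expansion already recorded just before the statement: by Lemma~\ref{lemma:sigma}, with probability $1-\xi_4$ one has $\tfrac1n(\mb X\mb\Phi)^T\mb X\mb\Phi=\mb\Sigma_\Phi+\epsilon_0\mb Y$ with $\|\mb Y\|\le d$ and $\epsilon_0=m_{\max}^2\sqrt{(c_3/n)\log(c_4 d^2/\xi_4)}$, so that
\[
((\mb X\mb\Phi)^T\mb X\mb\Phi)^{-1}(\mb X\mb\Phi)^T\eta=\tfrac1n\mb\Sigma_\Phi^{-1}(\mb X\mb\Phi)^T\eta-\tfrac{\epsilon_0}{n}\mb\Sigma_\Phi^{-1}\mb Y\mb\Sigma_\Phi^{-1}(\mb X\mb\Phi)^T\eta+\mb r,
\]
where $\mb r$ collects the $O(\epsilon_0^2)$ contributions. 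The structural fact I would use is established at the start of the proof of Theorem~\ref{theorem:cbebf}: $\eta$ is a martingale-difference sequence, $\mathbb{E}[\eta_t\mid\mb x_1,\dots,\mb x_t,r_1,\dots,r_{t-1}]=0$, with $|\eta_t|$ bounded by a constant multiple of $\delta_{\max}$; hence for any matrix whose $t$-th column is a function of $\mb x_t$, Lemma~\ref{lemma:mart} bounds its product with $\eta$.

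For the leading term I would write $(\mb X\mb\Phi)^T\eta=\mb\Phi^T\mb X^T\eta$ and apply the coordinate-wise martingale bound in the \emph{original} $D$-dimensional feature space: because only $k$ of the $D$ coordinates of any $\mb x_t$ are active, the union bound over the $D$ original coordinates costs $\log(2D/\xi_3)$ while the effective dimension entering the quadratic-variation sum is $k$ rather than $d$, producing a bound of the form $\delta_{\max}\|\mb X\mb\Phi\|\sqrt{2k\log(2D/\xi_3)}$ with probability $1-\xi_3$; together with the $\tfrac1n\|\mb\Sigma_\Phi^{-1}\|$ and $m_{\max}$ factors this is the first line of the asserted inequality. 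For the first-order correction term I would instead bound $\|(\mb X\mb\Phi)^T\eta\|$ directly in the $d$-dimensional projected space, $\|(\mb X\mb\Phi)^T\eta\|\le\delta_{\max}\|(\mb X\mb\Phi)^T\|_F\sqrt{2\log(2d/\xi_5)}\le\delta_{\max}\sqrt d\,\|\mb X\mb\Phi\|\sqrt{2\log(2d/\xi_5)}$ (the rank of $\mb X\mb\Phi$ is at most $d$), use $\|\mb\Sigma_\Phi^{-1}\mb Y\mb\Sigma_\Phi^{-1}\|\le d\,\|\mb\Sigma_\Phi^{-1}\|^2$ from Lemma~\ref{lemma:sigma}, and substitute $\epsilon_0=\tilde O(n^{-1/2})$; the powers of $d$, $n$ and $m_{\max}$ then collect into the second line. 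Finally, every surviving term of $\mb r$ carries a factor $\epsilon_0^2=\tilde O(1/n)$ on top of the $1/n$ prefactor, with only polynomial factors of $d$, $\|\mb\Sigma_\Phi^{-1}\|$ and $\|\mb X\mb\Phi\|$ besides, so $\|\mb r\|=\tilde O(n^{-2})$; summing the three contributions and the failure probabilities $\xi_3,\xi_4,\xi_5$ completes the argument.

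The main obstacle is getting the dimension-versus-sparsity dependence to come out exactly as stated in the leading line: extracting a $\sqrt k$ and a $\log D$ (rather than $\sqrt d$ and $\log d$) from the martingale term requires exploiting the $k$-sparsity of $\mb x_t$ carefully instead of naively covering the $d$-sphere, and one must keep the operator norm $\|\mb X\mb\Phi\|$ rather than a Frobenius norm throughout the first line while still allowing the Frobenius estimate in the correction term. The second delicate point is the bookkeeping for the $O(\epsilon_0^2)$ remainder: after multiplying the crude bounds $\|\mb\Sigma_\Phi^{-1}\|\le d$, $\|\mb X\mb\Phi\|=\tilde O(\sqrt{n/d})$ and the martingale estimate, one must verify that what is left really is $\tilde O(n^{-2})$ and not something larger, so that it can legitimately be absorbed into the last line of the bound.
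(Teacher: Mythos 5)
Your proposal follows essentially the same route as the paper's proof: the same reduction of the pseudo-inverse to $((\mb X\mb\Phi)^T\mb X\mb\Phi)^{-1}(\mb X\mb\Phi)^T$, the same Neumann expansion via Lemma~\ref{lemma:sigma} into a leading term, a first-order correction, and an $O(\epsilon_0^2/n)=\tilde O(n^{-2})$ remainder, with the leading term handled by a union bound of Lemma~\ref{lemma:mart} over the $D$ vectors of the sparsity basis $\Psi$ and an $\ell_1$--$\ell_2$ step over the $k$ active coefficients to extract $\sqrt{k\log D}$, and the correction term handled by the crude $m_{\max}$/Frobenius estimates in the projected space. The argument is correct and matches the paper's in all essentials.
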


\begin{proof}
Since $\mb X$ is of rank bigger than $d$, we have $d<n$, and with the use of random projections $\mb X \mb \Phi$ is full rank with probability 1 (see e.g. \cite{fard2012comp}). We can thus substitute the inverse by $[(\mb X \mb \Phi)^T \mb X \mb \Phi]^{-1} (\mb X \mb \Phi)^T$. Using Lemma~\ref{lemma:sigma}, we get with probability $1-\xi_4$ for all $\mb x \in \mathcal X$:
\begin{eqnarray}
&& \hspace{-20px}\left\|  (\mb x^T \mb \Phi) (\mb X \mb \Phi)^\dagger \eta \right\|_{\rho(\mb x)}  = \left\|  (\mb x^T \mb \Phi) ((\mb X \mb \Phi)^T \mb X \mb \Phi)^{-1} (\mb X \mb \Phi)^T \eta \right\|_{\rho(\mb x)}\\
&=& \left\|  (\mb x^T \mb \Phi) \left[\frac{1}{n}  (\mb \Sigma^{-1}_\Phi - \epsilon_0 \mb \Sigma^{-1}_\Phi \mb Y \mb \Sigma^{-1}_\Phi + O(\epsilon^2_0))\right] (\mb X \mb \Phi)^T \eta \right\|_{\rho(\mb x)}\\
&\leq& \left\|  \frac{1}{n} (\mb x^T \mb \Phi) \mb \Sigma^{-1}_\Phi (\mb X \mb \Phi)^T \eta \right\|_{\rho(\mb x)} 
+ \left\|  \frac{\epsilon_0}{n}  (\mb x^T \mb \Phi) \mb \Sigma^{-1}_\Phi \mb Y \mb \Sigma^{-1}_\Phi (\mb X \mb \Phi)^T \eta \right\|_{\rho(\mb x)}
+ O\left(\frac{\epsilon^2_0}{n}\right). \label{label:varterms}
\end{eqnarray}
To bound the first term, let $e_i$ be the $i$th column of $\Psi$ (see definition of $\mathcal X$ in the notation section). Thus $\{\mb e_i\}_{1 \leq i \leq D}$ is an orthonormal basis under which $\mb x \in \mathcal X$ is sparse, and all $\mb e_i$'s are in $\mathcal X$. Applying Lemma~\ref{lemma:mart}, $D$ times, we get that for all $\mb e_i$, with probability $1-\xi_3/D$:
\begin{eqnarray}
\left|  \frac{1}{n} (\mb e_i^T \mb \Phi) \mb \Sigma^{-1}_\Phi (\mb X \mb \Phi)^T \eta \right| &\leq& \delta_{\max} \left\|  \frac{1}{n} (\mb e_i^T \mb \Phi) \mb \Sigma^{-1}_\Phi (\mb X \mb \Phi)^T \right\| \sqrt{2 \log \frac{2D}{\xi_3}} \\
&\leq& \frac{\delta_{\max} m_{\max}}{n}   \left\|\mb  \Sigma^{-1}_\Phi\right\|   \| \mb X \mb \Phi \| \sqrt {2 \log \frac{2D}{\xi_3}}. \label{label:basisbound}
\end{eqnarray}
The union bound gives us that Line~\ref{label:basisbound} hold simultaneously for all $\mb e_i$'s with probability $1-\xi_3$. Therefore with probability $1-\xi_3$ for any $\mb x = \sum_i \alpha_i \mb e_i$:
\begin{eqnarray}
\left( \frac{1}{n} (\mb x^T \mb \Phi) \mb \Sigma^{-1}_\Phi (\mb X \mb \Phi)^T \eta \right)^2 &=& \left(\frac{1}{n} \sum^D_{i=1} (\alpha_i \mb e_i^T \mb \Phi) \mb \Sigma^{-1}_\Phi (\mb X \mb \Phi)^T \eta \right)^2\\
&\leq& \left( \frac{1}{n} \sum^D_{i=1} |\alpha_i| \left|  (\mb e_i^T \mb \Phi) \mb \Sigma^{-1}_\Phi (\mb X \mb \Phi)^T \eta \right| \right)^2\\
&\leq& \left( \frac{\delta_{\max} m_{\max}}{n}   \left\|\mb  \Sigma^{-1}_\Phi\right\|   \| \mb X \mb \Phi \| \sqrt {2 \log \frac{2D}{\xi_3}}  \right)^2 \left( \sum^D_{i=1} |\alpha_i| \right)^2.
\end{eqnarray}
Because $\mb x$ is $k$-sparse, we have that $\sum^D_{i=1} |\alpha_i| \leq \sqrt k \|\mb x\| \leq \sqrt k$. As the above holds for all $\mb x = \mathcal X$, it holds for the expectation under $\rho$. We thus get:
\begin{eqnarray}
\left\| \frac{1}{n} (\mb x^T \mb \Phi) \mb \Sigma^{-1}_\Phi (\mb X \mb \Phi)^T \eta \right\|_{\rho(\mb x)} &\leq& \frac{\delta_{\max} m_{\max}}{n}   \left\|\mb  \Sigma^{-1}_\Phi\right\|   \| \mb X \mb \Phi \| \sqrt {2k \log \frac{2D}{\xi_3}}. \label{label:firsttermvar}
\end{eqnarray}

For the second term of Line~\ref{label:varterms}, we first split and then apply Lemma~\ref{lemma:mart}. 
\begin{eqnarray}
\hspace{-10px} \left|  \frac{\epsilon_0}{n}  (\mb x^T \mb \Phi) \mb \Sigma^{-1}_\Phi \mb Y \mb \Sigma^{-1}_\Phi (\mb X \mb \Phi)^T \eta \right|
&\leq& \frac{\epsilon_0}{n} \left\|\mb \Phi^T \mb x\right\| \left\|\mb \Sigma^{-1}_\Phi\right\|^2 \|\mb Y\| \left\| (\mb X \mb \Phi)^T \eta \right\|.
\end{eqnarray}
Using Lemma~\ref{lemma:sigma}, we have with probability $1-\xi_4$ that $\| \mb Y \| \leq d$. Applying Lemma~\ref{lemma:mart} to the $\left\| (\mb X \mb \Phi)^T \eta \right\|$ term we get with probability $1-\xi_5$ for all $\mb x \in \mathcal X$:
\begin{eqnarray}
\hspace{-10px} \left|  \frac{\epsilon_0}{n}  (\mb x^T \mb \Phi) \mb \Sigma^{-1}_\Phi \mb Y \mb \Sigma^{-1}_\Phi (\mb X \mb \Phi)^T \eta \right|
&\leq& \frac{\epsilon_0}{n} \left\|\mb \Phi^T \mb x\right\| \left\|\mb \Sigma^{-1}_\Phi\right\|^2 d \delta_{\max} \left\|\mb X \mb \Phi\right\|_F \sqrt {2 \log \frac{2d}{\xi_5}}\\
&\leq& \frac{\epsilon_0}{n} m_{\max} \left\|\mb \Sigma^{-1}_\Phi\right\|^2 d \delta_{\max} \sqrt d \left\|\mb X \mb \Phi\right\| \sqrt {2 \log \frac{2d}{\xi_5}}.
\end{eqnarray}
As the above holds for all $\mb x \in \mathcal X$, it holds for any expectation on with measures defined on $\mathcal X$:
\begin{eqnarray}
\left\|  \frac{\epsilon_0}{n}  (\mb x^T \mb \Phi) \mb \Sigma^{-1}_\Phi \mb Y \mb \Sigma^{-1}_\Phi (\mb X \mb \Phi)^T \eta \right\|_{\rho(\mb x)}
\leq \frac{\epsilon_0 \delta_{\max} m_{\max}\sqrt{d^3}}{n}  \left\|\mb \Sigma^{-1}_\Phi\right\|^2 \left\|\mb X \mb \Phi\right\| \sqrt {2 \log \frac{2d}{\xi_5}}. \label{label:secondtermvar}
\end{eqnarray}
Substituting $\epsilon_0$ of Lemma~\ref{lemma:sigma}, and using Lines~\ref{label:firsttermvar} and \ref{label:secondtermvar} into \ref{label:varterms} will finish the proof.
\end{proof}

\section{Proof of Error Contraction Lemmas}
This section will finish the proof of the lemmas presented in the paper.

\subsection{Proof of Lemma 3}
\begin{proof}[Proof of Lemma 3]
We have that $V^\pi$ is the fixed point to the Bellman operator (i.e. $\mathcal T V^\pi = V^\pi$), and that the operator is a contraction with respect to the weighted $L^2$ norm on the stationary distribution $\rho$~\cite{van1998learning}:
\begin{eqnarray}
\left\| \mathcal T V(\mb x) - \mathcal T V'(\mb x) \right\|_{\rho(\mb x)} \leq \gamma \left\| V(\mb x) - V'(\mb x) \right\|_{\rho(\mb x)}.
\end{eqnarray}
We thus have:
\begin{eqnarray}
&& \hspace{-40px}\left\| V^\pi(\mb x) - (V(\mb x) + \psi(\mb x)) \right\|_{\rho(\mb x)}\\
&\leq& \left\| V^\pi(\mb x) - \mathcal T V(\mb x) \right\|_{\rho(\mb x)} + \left\|  (\mathcal T V(\mb x) - V(\mb x)) - \psi(\mb x) \right\|_{\rho(\mb x)}\\
&\leq& \left\| \mathcal T V^\pi(\mb x) - \mathcal T V(\mb x) \right\|_{\rho(\mb x)} + \epsilon \left\|  \mathcal T V(\mb x) - V(\mb x) \right\|_{\rho(\mb x)}\\
&\leq& \gamma \left\|  V^\pi(\mb x) -  V(\mb x) \right\|_{\rho(\mb x)} + \epsilon \left( \left\|  \mathcal T V(\mb x) - \mathcal T V^\pi(\mb x) \right\|_{\rho(\mb x)} + \left\| V^\pi(\mb x) - V(\mb x) \right\|_{\rho(\mb x)}  \right)\\ 
&\leq& (\gamma + \epsilon \gamma + \epsilon) \left\| V^\pi(\mb x) - V(\mb x)\right\|_{\rho(\mb x)}.
\end{eqnarray}
\end{proof}

\subsection{Proof of Lemma 4}
\begin{proof}[Proof of Lemma 4]
We have that:
\begin{eqnarray}
\left\| V^\pi(\mb x) - V(\mb x)\right\|_{\rho(\mb x)} &\leq& \left\| \mathcal T V^\pi(\mb x) - \mathcal T V(\mb x) \right\|_{\rho(\mb x)} + \left\|  \mathcal T V(\mb x) - V(\mb x) \right\|_{\rho(\mb x)}\\
&\leq& \gamma \left\| V^\pi(\mb x) - V(\mb x) \right\|_{\rho(\mb x)} + \left\|  \mathcal T V(\mb x) - V(\mb x) \right\|_{\rho(\mb x)},
\end{eqnarray}
and thus:
\begin{eqnarray}
\left\| V^\pi(\mb x) - V(\mb x)\right\|_{\rho(\mb x)} &\leq& \frac{1}{1-\gamma} \left\|  \mathcal T V(\mb x) - V(\mb x) \right\|_{\rho(\mb x)}.
\end{eqnarray}
Let $\epsilon = c / \left\| e_V(\mb x) \right\|_{\rho(\mb x)}$. If the contraction does not happen, then due to Lemma 3, we must have:
\begin{eqnarray}
\gamma + \epsilon \gamma + \epsilon \geq 1 \;\; &\Rightarrow& \;\; \epsilon > \frac{1-\gamma}{1+\gamma} \\
\;\; &\Rightarrow& \;\;  \left\|  \mathcal T V(\mb x) - V(\mb x) \right\|_{\rho(\mb x)} \leq \frac{1+\gamma}{1-\gamma} c\\
\;\; &\Rightarrow& \;\;  \left\| V^\pi(\mb x) - V(\mb x)\right\|_{\rho(\mb x)} \leq \frac{1+\gamma}{(1-\gamma)^2} c.
\end{eqnarray}

\end{proof}

\newpage
\bibliographystyle{unsrtnat}
\small{
\bibliography{cbebf-arxiv}
}

\end{document}